%% file: main_arxiv.tex
\documentclass[]{fairmeta}

\input{preamble}

\title{Off-Context GRPO: Learning to Reason on Hard Problems using Privileged Information}

\author[1,2,*]{Priyank Agrawal}
\author[1,2]{Ankur Samanta}
\author[1]{Shervin Ghasemlou}
\author[1]{Boris Vidolov}
\author[1]{Jalaj Bhandari}
\author[1,\dagger]{Kavosh~Asadi}
\author[1,\dagger]{Daniel Jiang}
\author[1,\dagger]{Aditya Modi}

\affiliation[1]{Meta AI}
\affiliation[2]{Columbia University}
\contribution[*]{Work done at Meta}
\contribution[\dagger]{Joint last author}

\abstract{\input{abstract}}

\date{\today}
\correspondence{Priyank Agrawal at \email{pa2608@columbia.edu}}

\metadata[Code]{\url{https://github.com/AgPriyank/OC-GRPO}}

\begin{document}

\maketitle

\input{introduction}

\input{preliminaries}
\input{overcoming_learning_cliff}

\input{off_context_GRPO}
\input{empirical_results}
\input{conclusion}

\clearpage
\newpage
\bibliographystyle{assets/plainnat}
\bibliography{ref}

\input{appendix.tex}

\end{document}

%% file: preamble.tex
\usepackage{color}
\usepackage{xcolor}
\usepackage{amsthm,amssymb}
\usepackage{amsmath}
\usepackage{comment}
\usepackage{commath}
\usepackage{enumitem}
\usepackage{mathtools}
\usepackage{multirow}
\usepackage{wrapfig}
\usepackage{pifont}
\usepackage[most]{tcolorbox}

\usepackage{nicefrac}
\usepackage{amsfonts}
\usepackage{commath}
\usepackage{esvect}

\usepackage[ruled,vlined]{algorithm2e}

\SetCommentSty{emph}
\SetAlFnt{\small}
\SetKwComment{tcp}{\hfill\(\triangleright\) }{}

\usepackage{natbib}
\usepackage{makecell}
\usepackage{cancel}

\usepackage{colortbl}

\newcommand{\pa}[1]{\textcolor{olive}{#1}}

\usepackage{microtype}
\usepackage{graphicx}
\usepackage{subcaption}
\usepackage{booktabs} 
\usepackage{multirow}
\usepackage{comment}
\usepackage{thmtools} 
\usepackage{thm-restate}

\usepackage{hyperref}
\usepackage{lineno}

\usepackage{xcolor}         
\usepackage[utf8]{inputenc} 
\usepackage[T1]{fontenc}    
\usepackage{hyperref}       
\usepackage{url}            
\usepackage{booktabs}       
\usepackage{amsfonts}       
\usepackage{nicefrac}       
\usepackage{microtype}      
\usepackage{comment}

\usepackage{amsmath}
\usepackage{amssymb}
\usepackage{mathtools}
\usepackage{amsthm}
\usepackage{makecell}

\usepackage{float}

\usepackage{longtable}
\usepackage{comment}

\usepackage{arydshln}

\usepackage{rotating}

\usepackage{array, multirow, bigdelim, booktabs} 
\usepackage{makecell}
\usepackage{mathtools}
\usepackage{commath}
\usepackage{xcolor,color}
\newcommand{\toberemoved}[1]{}
\newcommand{\algoname}{\ensuremath{\mathtt{OC\text{-}GRPO}}}

\newcommand{\algonameAdaptive}{\ensuremath{\mathtt{OC\text{-}GRPO\text{-}Adaptive}}}
\newcommand{\algonameFixed}{\ensuremath{\mathtt{OC\text{-}GRPO\text{-}Fixed}}}
\newcommand{\algonameOn}{\algonameAdaptive}
\newcommand{\algonameOff}{\algonameFixed}

\newtheorem{lemma}{Lemma}

\newtheorem{theorem}{Theorem}
\newtheorem{assumption}{Assumption}

\theoremstyle{plain}

\theoremstyle{definition}

\theoremstyle{plain}
\newtheorem{remark}[theorem]{Remark}

\theoremstyle{plain}
\newtheorem{example}{Example}


\usepackage[textsize=tiny]{todonotes}

\usepackage{amsmath,amssymb}

\usepackage{xcolor}

\usepackage{bbm}        

\usepackage{listings}

\usepackage[most]{tcolorbox} 
\tcbuselibrary{listings,breakable,skins}

\newtcolorbox{hypobox}{
  colback=blue!4!white,
  colframe=blue!70!black,
  fonttitle=\bfseries,
  title=Hypothesis,
  breakable
}

\newtcblisting{promptbox}[1][]{%
  colback=gray!3!white,
  colframe=gray!60!black,
  listing only,
  breakable,
  left=1mm,right=1mm,top=1mm,bottom=1mm,
  listing options={basicstyle=\ttfamily\footnotesize,breaklines=true},
  title=#1
}

\usepackage{booktabs}
\usepackage{siunitx}
\usepackage{threeparttable} 
 \usepackage{multirow}
 \usepackage{rotating} 

  \usepackage[table]{xcolor}
 \usepackage{booktabs,tabularx,makecell,array}
 \newcolumntype{Y}{>{\raggedright\arraybackslash}X}

\definecolor{bestgreen}{RGB}{198,239,206}
\definecolor{goodgreen}{RGB}{226,239,218}
\definecolor{mixedyellow}{RGB}{255,242,204}
\definecolor{lateorange}{RGB}{252,228,214}
\definecolor{failred}{RGB}{244,204,204}
\definecolor{sectiongray}{RGB}{242,242,242}

\definecolor{softgreen}{RGB}{226,239,218}
\definecolor{softyellow}{RGB}{255,242,204}
\definecolor{softred}{RGB}{248,215,218}
\definecolor{softgray}{RGB}{242,242,242}

\newcommand{\rescell}[2]{\makecell[c]{#1\\[-0.05em]{\tiny #2}\rule[-0.48em]{0pt}{0.48em}}}

\usepackage{tikz}
\usetikzlibrary{positioning, fit, backgrounds, shapes.misc, calc}

\newcommand{\duringbest}[2]{\rescell{\textit{#1}}{\textit{#2}}}

\usepackage[utf8]{inputenc} 
\usepackage[T1]{fontenc}    
\usepackage{hyperref}       
\usepackage{url}            
\usepackage{booktabs}       
\usepackage{amsfonts}       
\usepackage{nicefrac}       
\usepackage{microtype}      
\usepackage{xcolor}         
\usepackage{tikz}
\usetikzlibrary{arrows.meta, decorations.pathreplacing, shapes.multipart}
\usepackage{pifont}

\definecolor{cg}{RGB}{76,175,80}
\definecolor{cr}{RGB}{239,83,80}
\definecolor{cb}{RGB}{66,133,244}
\definecolor{cy}{RGB}{255,128,0}

%% file: abstract.tex
Reinforcement learning with verifiable rewards (RLVR) improves reasoning in large language models. Yet, typical RLVR approaches fail on difficult problems: when a model cannot generate any correct solutions, it receives \textit{zero} learning signal. Providing privileged guidance during training, such as solution prefixes, can help overcome this learning cliff by steering the model towards {correct solutions with non-zero reward}. {We call these rollouts \textit{off-context}: they are generated from a training prompt that contains privileged guidance, while the target objective is defined by the original prompt without that guidance.} {We introduce} Off-Context GRPO (OC-GRPO), a minimally modified variant of GRPO that uses guided rollouts but applies an importance-corrected objective to steer the update back toward the original unguided objective, avoiding the mismatch that destabilizes uncorrected guided training. Empirically, our algorithm achieves a 3.9\% absolute improvement (13.8\% relative gain) over vanilla GRPO on average across standard mathematical reasoning benchmarks with negligible additional cost.

%% file: introduction.tex

\section{Introduction}

\begin{figure*}
\centering
\resizebox{\textwidth}{!}{%
\begin{tikzpicture}[>=Stealth, font=\sffamily]

\def\rone{1.35}
\def\rtwo{0.58}
\def\rthree{-0.19}
\def\rfour{-0.96}
\def\ptop{2.55}
\def\pbot{-2.05}
\def\titleInset{0.28}
\def\titleY{2.50}

\tikzset{
  panel/.style={draw=black!14, fill=black!3, rounded corners=5pt, line width=0.45pt},
  ptitle/.style={font=\sffamily\bfseries\footnotesize, anchor=north west},
  prompt/.style={draw=none, fill=cb!32, rounded corners=3pt,
                 font=\sffamily\scriptsize, align=center,
                 text width=1.48cm, minimum height=0.40cm, inner sep=2pt},
  guidance/.style={draw=none, fill=cy!68, rounded corners=3pt,
                   font=\sffamily\scriptsize, align=center,
                   text width=1.48cm, minimum height=0.40cm, inner sep=2pt},
  gprompt/.style={rectangle split, rectangle split parts=2,
                  rectangle split part fill={cb!6, cy!9},
                  draw=none, rounded corners=3pt,
                  font=\sffamily\scriptsize, align=left, inner sep=5pt, text width=1.52cm},
  good/.style={draw=none, fill=cg!40, rounded corners=2pt,
               minimum width=1.35cm, minimum height=0.36cm,
               font=\sffamily\scriptsize},
  bad/.style={draw=none, fill=cr!36, rounded corners=2pt,
              minimum width=1.35cm, minimum height=0.36cm,
              font=\sffamily\scriptsize},
  smallhead/.style={font=\sffamily\scriptsize\bfseries, text=black!68},
  flow/.style={-{Stealth[length=2.1pt,width=2.5pt]}, black!34, line width=0.50pt,
               line cap=round, line join=round, shorten >=1.5pt, shorten <=1.5pt}
}

\def\ax{0}
\node[panel, minimum width=4.15cm, minimum height=5.05cm, anchor=center] (paPanel) at (\ax, 0.25) {};
\node[ptitle] at ({\ax-2.075+\titleInset}, \titleY) {(a) Standard GRPO};

\node[prompt]
  (pa) at ({\ax-1.10}, 0.18) {\textbf{Problem} $x$};
\node[bad] (a1) at ({\ax+1.14}, \rone) {$y_1\quad r=0$};
\node[bad] (a2) at ({\ax+1.14}, \rtwo) {$y_2\quad r=0$};
\node[bad] (a3) at ({\ax+1.14}, \rthree) {$y_3\quad r=0$};
\node[bad] (a4) at ({\ax+1.14}, \rfour) {$y_4\quad r=0$};

\draw[flow] ([xshift=1pt,yshift=3pt]pa.east)
  .. controls ({\ax-0.30}, 0.55) and ({\ax+0.25}, \rone) .. (a1.west);
\draw[flow] ([xshift=1pt,yshift=1pt]pa.east)
  .. controls ({\ax-0.24}, 0.35) and ({\ax+0.18}, \rtwo) .. (a2.west);
\draw[flow] ([xshift=1pt,yshift=-1pt]pa.east)
  .. controls ({\ax-0.24}, 0.00) and ({\ax+0.18}, \rthree) .. (a3.west);
\draw[flow] ([xshift=1pt,yshift=-3pt]pa.east)
  .. controls ({\ax-0.30}, -0.32) and ({\ax+0.25}, \rfour) .. (a4.west);

\def\bx{4.40}
\node[panel, minimum width=4.15cm, minimum height=5.05cm, anchor=center] (pbPanel) at (\bx, 0.25) {};
\node[ptitle] at ({\bx-2.075+\titleInset}, \titleY) {(b) Guided rollouts};

\node[prompt]
  (pb) at ({\bx-1.10}, 0.18) {\textbf{Problem} $x$};
\node[guidance]
  (pg) at ({\bx-1.10}, -0.84) {\textbf{Guidance} $g$};
\draw[-{Stealth[length=2.25pt,width=2.65pt]}, black!55, line width=0.64pt, line cap=round, line join=round]
  (pg.north) -- ([yshift=-2pt]pb.south);
\node[good] (b1) at ({\bx+1.14}, \rone) {$y_1\quad r=1$};
\node[good] (b2) at ({\bx+1.14}, \rtwo) {$y_2\quad r=1$};
\node[bad]  (b3) at ({\bx+1.14}, \rthree) {$y_3\quad r=0$};
\node[bad]  (b4) at ({\bx+1.14}, \rfour) {$y_4\quad r=0$};

\draw[flow] ([xshift=1pt,yshift=3pt]pb.east)
  .. controls ({\bx-0.30}, 0.55) and ({\bx+0.25}, \rone) .. (b1.west);
\draw[flow] ([xshift=1pt,yshift=1pt]pb.east)
  .. controls ({\bx-0.24}, 0.35) and ({\bx+0.18}, \rtwo) .. (b2.west);
\draw[flow] ([xshift=1pt,yshift=-1pt]pb.east)
  .. controls ({\bx-0.24}, 0.00) and ({\bx+0.18}, \rthree) .. (b3.west);
\draw[flow] ([xshift=1pt,yshift=-3pt]pb.east)
  .. controls ({\bx-0.30}, -0.32) and ({\bx+0.25}, \rfour) .. (b4.west);

\def\cx{9.35}
\def\pcx{\cx+0.75}
\def\zeroX{11.23}
\def\barscale{0.95}
\def\barh{0.12}
\def\cone{0.55}
\def\ctwo{-0.01}
\def\cthree{-0.57}
\def\cfour{-1.13}
\def\cheadY{1.16}
\def\plotTop{0.86}
\def\plotBottom{-1.38}
\def\rowSepOffset{0.28}
\def\axisY{-1.53}
\node[panel, minimum width=6.70cm, minimum height=5.05cm, anchor=center] (pcPanel) at (\pcx, 0.25) {};
\node[ptitle] (pcTitle)
  at ({\pcx-3.35+\titleInset}, \titleY)
  {(c) OC-GRPO correction:};
\node[font=\sffamily\footnotesize, anchor=base west, inner sep=0pt, text=black]
  at ([xshift=0.16cm]pcTitle.base east)
  {\scalebox{0.9}{$\displaystyle \rho_i^{\mathrm{oc}}(\theta)=\frac{\pi_\theta(y_i\mid {\color{cb!95!black}x})}
    {\pi_{\theta_{\mathrm{old}}}(y_i\mid g({\color{cb!95!black}x}))}$}};

\node[smallhead, anchor=base] at ({\cx-1.12}, \cheadY) {$\widehat A_i$};
\node[smallhead, anchor=base] at ({\cx-0.10}, \cheadY) {$\rho_i^{\mathrm{oc}}(\theta)$};
\node[smallhead, anchor=base] at (\zeroX, \cheadY) {$\rho_i^{\mathrm{oc}}(\theta)\widehat A_i$};
\draw[black!15, line width=0.45pt] ({\cx-2.05}, \plotTop) -- ({\cx+3.25}, \plotTop);

\foreach \yy in {\cone,\ctwo,\cthree,\cfour} {
  \draw[black!6, line width=0.25pt] ({\cx-2.05}, {\yy-\rowSepOffset}) -- ({\cx+0.55}, {\yy-\rowSepOffset});
}

\node[font=\sffamily\scriptsize] at ({\cx-1.78}, \cone) {$y_1$};
\node[font=\sffamily\scriptsize] at ({\cx-1.78}, \ctwo) {$y_2$};
\node[font=\sffamily\scriptsize] at ({\cx-1.78}, \cthree) {$y_3$};
\node[font=\sffamily\scriptsize] at ({\cx-1.78}, \cfour) {$y_4$};

\node[font=\sffamily\scriptsize, text=cg!55!black] at ({\cx-1.12}, \cone) {$+0.87$};
\node[font=\sffamily\scriptsize, text=cg!55!black] at ({\cx-1.12}, \ctwo) {$+0.87$};
\node[font=\sffamily\scriptsize, text=cr!60!black] at ({\cx-1.12}, \cthree) {$-0.87$};
\node[font=\sffamily\scriptsize, text=cr!60!black] at ({\cx-1.12}, \cfour) {$-0.87$};

\node[font=\sffamily\scriptsize, text=black!70] at ({\cx-0.10}, \cone) {$0.7$};
\node[font=\sffamily\scriptsize, text=black!70] at ({\cx-0.10}, \ctwo) {$0.9$};
\node[font=\sffamily\scriptsize, text=black!70] at ({\cx-0.10}, \cthree) {$1.1$};
\node[font=\sffamily\scriptsize, text=black!70] at ({\cx-0.10}, \cfour) {$1.2$};

\draw[black!42, line width=0.55pt]
  (\zeroX, \plotTop) -- (\zeroX, \plotBottom);
\foreach \ref in {-0.87,0.87} {
  \draw[dash pattern=on 0.35pt off 1.55pt, line cap=round, black!24, line width=0.45pt]
    ({\zeroX+\ref*\barscale}, \plotTop) -- ({\zeroX+\ref*\barscale}, \plotBottom);
}

\path[draw=none, fill=cg!40, rounded corners=0.7pt]
  (\zeroX, {\cone-\barh}) rectangle ({\zeroX+0.61*\barscale}, {\cone+\barh});
\path[draw=none, fill=cg!45, rounded corners=0.7pt]
  (\zeroX, {\ctwo-\barh}) rectangle ({\zeroX+0.78*\barscale}, {\ctwo+\barh});
\path[draw=none, fill=cr!36, rounded corners=0.7pt]
  ({\zeroX-0.96*\barscale}, {\cthree-\barh}) rectangle (\zeroX, {\cthree+\barh});
\path[draw=none, fill=cr!41, rounded corners=0.7pt]
  ({\zeroX-1.04*\barscale}, {\cfour-\barh}) rectangle (\zeroX, {\cfour+\barh});

\draw[black!42, line width=0.55pt]
  ({\zeroX-1.04*\barscale}, \axisY) -- ({\zeroX+1.04*\barscale}, \axisY);
\foreach \tick/\lab in {-0.87/-0.87,0/0,0.87/+0.87} {
  \draw[black!42, line width=0.45pt]
    ({\zeroX+\tick*\barscale}, {\axisY-0.05}) -- ({\zeroX+\tick*\barscale}, {\axisY+0.05});
  \node[font=\sffamily\tiny, anchor=north, text=black!62] at ({\zeroX+\tick*\barscale}, {\axisY-0.08}) {$\lab$};
}

\end{tikzpicture}%
}
    \caption{\textbf{Off-Context GRPO.} \textbf{(a)}~On hard problems, unguided samples all receive reward $0$, so the group-relative advantages vanish. \textbf{(b)}~Guidance is supplied with the problem and produces successful rollouts, while the update still targets the unguided objective at $x$. \textbf{(c)}~\algoname{} keeps the guided rollouts and reweights their advantages with the importance weight shown in the panel. Successes made likely by guidance receive smaller positive contributions; failures receive larger negative contributions.}
    \label{fig:ocgrpo_schematic}
    \label{fig:oc-grpo-overview}
\end{figure*}
Reinforcement learning with verifiable rewards (RLVR) has emerged as a powerful approach for improving reasoning in large language models~\citep{guo2025deepseek,yu2025dapo,jaech2024openai}. The dominant algorithm for RLVR is Group Relative Policy Optimization (GRPO)~\citep{shao2024deepseekmath} which itself is based on the seminal PPO algorithm \citep{schulman2017proximal}. In GRPO, for each problem, we first sample a group of responses from an LLM, use a verifier to score each response, and then compute in-group advantages by normalizing rewards within the group. These advantage estimates are then used to update the LLM policy via a PPO-like clipped importance-sampling surrogate. The gradient signal in this GRPO update comes entirely from within-group reward variance, i.e., GRPO learns only when some rollouts in a group succeed and others fail, reinforcing the successful ones relative to the failures.

This requirement exposes a fundamental failure mode on hard problems. When every rollout in a group fails, rewards are uniformly zero, within-group variance collapses, and the gradient is identically zero, no matter how long training continues. We refer to this as the \textit{learning cliff}~\citep{guo2025g,zhang2025scaf,qu2026pope,yanlearning}. Escaping this cliff requires at least some successes within each rollout group. One workaround is to use oracle/expert solutions directly as training targets, via supervised fine-tuning or off-policy RL~\citep{yanlearning,song2024trial,agarwal2024policy,huang2025blending,zhang2025stephint,zhangbread}. This breaks the cliff by construction but at a cost: it distorts the base model's native reasoning style and requires heavy stabilization machinery such as reward shaping, entropy control, and hybrid SFT--RL objectives. A second class of methods takes a more surgical approach, using oracle information not as a training target but as \emph{context}, injecting privileged guidance into the prompt to make correct continuations reachable, while the model still samples its own rollouts. Examples include solution-prefix conditioning~\citep{qu2026pope,zhangbread,li2025staying}, output-space prefix injection~\citep{setlur2026reuse,amani2025rl}, and tiered in-prompt hints~\citep{zhang2025scaf}. This preserves GRPO's on-policy structure while breaking the cliff. However, all of these methods share a subtle but consequential flaw we call the \textit{off-context} problem: rollouts are sampled under a guided prompt the model never sees at deployment, yet gradients are computed as if the sampling and evaluation distributions matched. The result is either a misaligned objective or biased gradients that can destabilize training.

In this work, we propose~\textbf{\algoname{}: Off-context GRPO}, a minimal modification
to GRPO (Figure~\ref{fig:oc-grpo-overview}). \algoname{}~re-weights per-token advantages by the importance ratio between the unguided and guided policies, a standard technique used for correction for off-distribution sampling~\citep{precup2001off, degris2012off}. The resulting estimator is unbiased for the original unguided objective, and its variance scales with the length of the guidance prefix rather than the rollout, yielding a clear principle: use the shortest guidance that breaks the cliff. The framework is mechanism-agnostic, applying to any form of privileged signal since what matters is the context mismatch, not its form.

Beyond fixing the objective, the importance ratio encodes a meaningful inductive bias about how much credit each trajectory generated under the guided context deserves. As illustrated in Figure~\ref{fig:oc-grpo-overview}, the correction dampens the credit assigned to correct trajectories, and the dampening is stronger for solution trajectories that rely more heavily on the guidance, limiting the influence of successes the model could not have produced on its own. We formalize this intuition in Section~\ref{sec:understanding_oc_grpo}. Our contributions are as follows.

\begin{itemize}
    \item We formalize privileged guidance in RLVR as off-context sampling and show that existing methods implicitly optimize a different objective than the one evaluated at deployment.

    \item We propose \algoname{}, a minimal modification to GRPO that corrects for the off-context mismatch via importance sampling, with provable unbiasedness and variance controlled by guidance length.

    \item We prove that the importance correction induces a behavior-aware credit assignment (Theorem~\ref{thm:credit_assignment}): a guided success retains credit only to the extent that the model could have produced it without the guidance, while a failure that persists despite the guidance draws an amplified penalty. The update therefore reinforces genuine problem solving rather than hint-following, without any additional reward shaping.\looseness-1

    \item We empirically validate the proposed method on multiple models and benchmarks. On Qwen2.5-7B-Instruct, \algoname{} delivers a 13.8\% relative gain over GRPO on math reasoning. At 3B and 1.5B scale, guided-target baselines degrade below vanilla GRPO while \algoname{} retains consistent gains, evidence that back-generalization is capacity-dependent and the off-context correction matters most at smaller scales.\looseness-1
\end{itemize}

%% file: preliminaries.tex
\section{Preliminaries: RLVR and GRPO}
\label{sec:prelim}

RLVR trains a policy on problems with automatically checkable answers~\citep{guo2025deepseek}. A training distribution $\mathcal{D}$ supplies problems $x \in \mathcal{X}$, and the policy samples a response $y=(y_1,\ldots,y_T)$ autoregressively from $\pi_\theta(\cdot \mid x)$, where $T$ denotes the response length. We use the base model $\pi_{\mathrm{ref}}$ as the KL reference. A response may contain a reasoning trace, but the verifier scores only the extracted final answer: we write $a(y)$ for the answer extracted from $y$ and $a^\star(x)$ for the correct answer to $x$. The verifier returns the binary reward $r(x,y)=\mathbf{1}\{a(y)=a^\star(x)\}$. The RLVR objective is the expected verifier reward:
\begin{equation}
\label{eq:rl_objective}
J(\theta)
\;=\;
\mathbb{E}_{x \sim \mathcal{D}}\;
\mathbb{E}_{y \sim \pi_\theta(\cdot \mid x)}\!
\bigl[r(x, y)\bigr].
\end{equation}

GRPO converts these binary rewards into relative advantages within a rollout group~\citep{shao2024deepseekmath}. For each problem $x$, the behavior policy $\pi_{\theta_{\mathrm{old}}}$ generates $G$ responses $\{y_i\}_{i=1}^{G}$. We write $T_i$ for the length of $y_i$. Let $\mu_r$ and $\sigma_r$ be the mean and standard deviation of the group rewards $\{r(x,y_i)\}_{i=1}^G$. GRPO assigns
$\widehat{A}_i=(r(x,y_i)-\mu_r)/\sigma_r$. When $\sigma_r=0$, the group contains no preference information, and the usual convention sets all advantages to zero. The policy is updated by optimizing the clipped, KL-regularized surrogate averaged over training problems:
\begin{equation}
\label{eq:grpo}
\begin{aligned}
\mathcal{L}_{\mathrm{GRPO}}(\theta)
&=
\mathbb{E}_{x \sim \mathcal{D}}
\left[
\frac{1}{G}\sum_{i=1}^{G}
\frac{1}{T_i}\sum_{t=1}^{T_i}
\left\{
\min\!\left(
\rho_{i,t}(\theta)\widehat{A}_i,\,
\mathrm{clip}\!\left(\rho_{i,t}(\theta), 1{-}\epsilon, 1{+}\epsilon\right)\widehat{A}_i
\right)
- \beta\,\mathrm{KL}_{i,t}(\theta)
\right\}
\right].
\end{aligned}
\end{equation}
The per-token PPO ratio~\citep{schulman2017proximal} compares the current policy with the behavior policy that produced the rollout, and the KL term penalizes drift from the reference model:
\begin{equation}\label{eq:grpo_ratio}
\rho_{i,t}(\theta)
:=
\frac{\pi_\theta(y_{i,t} \mid x, y_{i,<t})}
{\pi_{\theta_{\mathrm{old}}}(y_{i,t} \mid x, y_{i,<t})},
\qquad
\mathrm{KL}_{i,t}(\theta)
:=
D_{\mathrm{KL}}\!\left(
\pi_\theta(\cdot \mid x,y_{i,<t})
\,\middle\|\,
\pi_{\mathrm{ref}}(\cdot \mid x,y_{i,<t})
\right).
\end{equation} 

%% file: overcoming_learning_cliff.tex
\section{Overcoming Learning Cliffs with Privileged Guidance }
\label{sec:improving_grpo}


Challenging reasoning benchmarks often contain problems that receive almost no verifier reward under the behavior policy $\pi_{\theta_{\mathrm{old}}}$. We call a problem $x$ \emph{hard} if its expected verifier reward under the behavioral policy
\begin{equation}
\label{eq:p_x}
{
p(x)
\;:=\;
\mathbb{E}_{y \sim \pi_{\theta_{\mathrm{old}}}(\cdot \mid x)}[r(x,y)]}
\end{equation}
is close to zero. Hard problems are common in practice, and they expose a fundamental failure mode of GRPO: if every sampled response is incorrect, rewards are uniformly zero and $\sigma_r=0$, so the group-relative advantages $\widehat{A}_i$ vanish. This means the gradient of~\eqref{eq:grpo} is identically zero, and nothing is learned from $x$. We refer to this as the \emph{learning cliff} in RLVR.

\begin{wrapfigure}[14]{r}{0.43\textwidth}
  \centering
  \vspace{-1em}
  \includegraphics[width=0.43\textwidth]{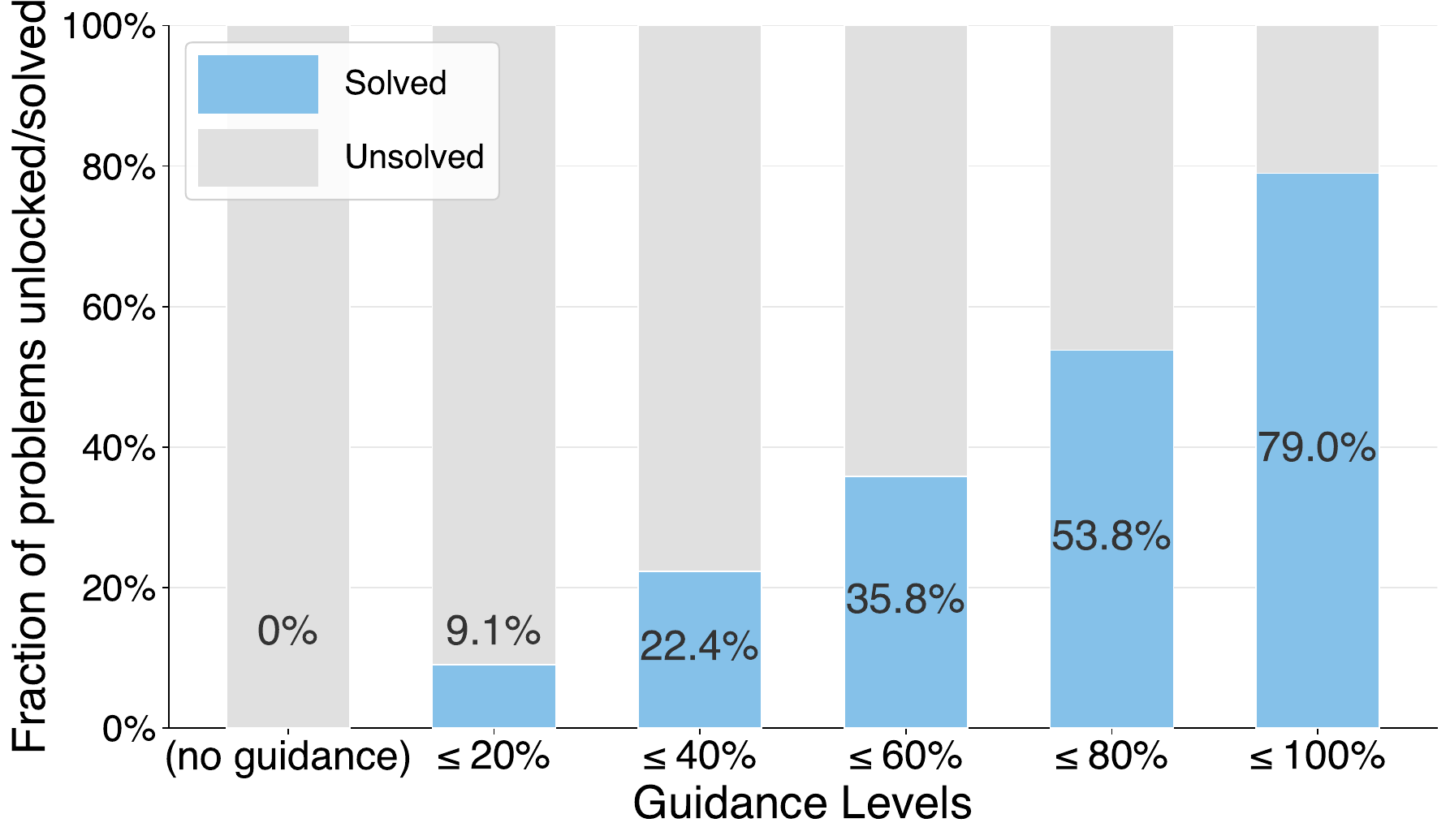}
  \caption{\small{Solution prefixes increase empirical expected verifier reward under $\pi_{\theta_{\mathrm{old}}}$ on hard MATH problems from 0.091 (none) to 0.790 (full prefix)~\citep{hendrycksmath2021}.}}
  \label{fig:guidance_unlocks}
  \vspace{-1em}
\end{wrapfigure}
A direct way to address the learning cliff is to use \emph{privileged guidance}: auxiliary training-time context, unavailable at inference, that raises expected verifier reward on a hard problem $x$. Let $g$ be a guidance strategy that maps each problem to a rollout prompt. It leaves non-hard problems unchanged, $g(x)=x$, and can add a solution prefix or problem-specific hint on hard problems (Appendix~\ref{app:hint_families}). The expected reward under the guided prompt is
\begin{equation}
\label{eq:q_x}
{
q(x)
\;:=\;
\mathbb{E}_{y \sim \pi_{\theta_{\mathrm{old}}}(\cdot \mid g(x))}[r(x,y)],}
\end{equation}
Useful guidance satisfies $q(x) > p(x)$ (see Theorem~\ref{thm:credit_assignment}). Figure~\ref{fig:guidance_unlocks} gives an illustration of this effect for increasing solution prefixes.

One way that recent methods have used privileged guidance is to sample rollouts under $g(x)$ and run GRPO as if $g(x)$ were the true task~\citep{zhang2025scaf, zhangbread, qu2026pope, setlur2026reuse, amani2025rl}. These guided-target methods effectively optimize
\begin{equation}
\label{eq:j_guide}
J^{\mathrm{guide}}(\theta)
\;:=\;
\mathbb{E}_{x \sim \mathcal{D}}\,
\mathbb{E}_{y \sim \pi_\theta(\cdot \mid g(x))}\!
\bigl[r(x, y)\bigr],
\end{equation}
which scores the original problem $x$ but trains the policy under the behavior prompt $g(x)$. The corresponding per-token ratio in~\eqref{eq:grpo_ratio} is
\begin{equation}
\label{eq:trajectory_ratio_others}
\rho^{\mathrm{guide}}_{i,t}(\theta)
\;\coloneqq\;
\frac{\pi_\theta(y_{i,t} \mid g(x), y_{i,<t})}
     {\pi_{\theta_{\mathrm{old}}}(y_{i,t} \mid g(x), y_{i,<t})},
\end{equation}
conditioning on $g(x)$ throughout rather than the original $x$ for hard problems. The objective $J^{\mathrm{guide}}$ is therefore different from the target $J$ because it trains the policy on responses conditioned on $g(x)$, while $J$ evaluates responses conditioned on $x$. When guidance carries significant information, these two can differ substantially. We refer to this mismatch as the \emph{off-context} problem.

Despite optimizing a misaligned objective, guided-target updates can still improve $J$ in practice. This transfer rests on \emph{back-generalization}, a hypothesis coined by \citet{setlur2026reuse} and shared by related guided-target methods~\citep{qu2026pope}: the rollout distributions induced by $g(x)$ and $x$ overlap, so a policy update that improves continuations under the guided prompt also improves them under the nearby unguided prompt. Back-generalization is empirical and not guaranteed in practice: Appendix~\ref{app:toy_example} gives a toy example where optimizing $J^{\mathrm{guide}}$ selects a policy that is useless under the original prompt, and our experiments in Section~\ref{sec:exp_results} suggest the transfer is capacity-dependent, with guided-target baselines degrading below vanilla GRPO at smaller model scales.

%% file: off_context_GRPO.tex
\section{Off-Context GRPO}\label{sec:oc_grpo}


\algoname{} addresses the off-context problem via a minimal modification to the GRPO update rule. Recall from Section~\ref{sec:improving_grpo} that existing methods effectively optimize $J^{\mathrm{guide}}(\theta)$ by sampling rollouts under the guided prompt $g(x)$ instead of the original prompt $x$. Importance sampling provides a principled correction:
\[
\mathbb{E}_{x \sim \mathcal{D},\,y \sim \pi_{\theta_{\mathrm{old}}}(\cdot \mid g(x))}\!\left[
\frac{\pi_\theta(y \mid x)}{\pi_{\theta_{\mathrm{old}}}(y \mid g(x))}\, f(y)
\right]
\;=\;
\mathbb{E}_{x \sim \mathcal{D},\,y \sim \pi_\theta(\cdot \mid x)}[f(y)].
\]
This makes the reweighted guided expectation exactly equivalent to the target expectation. When $g(x)=x$ (i.e., no guidance is applied), the per-token importance ratio reduces to the standard GRPO ratio in~\eqref{eq:grpo_ratio}, and the update is identical to standard GRPO.

In our setting, the sampling distribution is $\pi_{\theta_{\mathrm{old}}}(\cdot \mid g(x))$, the guided context that produced the rollout, and the target distribution is $\pi_\theta(\cdot \mid x)$, the unguided prompt that defines $J(\theta)$ in~\eqref{eq:rl_objective}. The appropriate per-token importance ratio for guided rollouts is therefore
\begin{equation}\label{eq:ic_trajectory_ratio}
\rho^{\mathrm{oc}}_{i,t}(\theta)
\;\coloneqq\;
\frac{\pi_\theta(y_{i,t} \mid x, y_{i,<t})}
     {\pi_{\theta_{\mathrm{old}}}(y_{i,t} \mid g(x), y_{i,<t})}.
\end{equation}

\begin{remark}
\label{rem:masking}
One alternative is to mask the guidance tokens from the loss~\citep{setlur2026reuse}. This restricts which tokens receive gradient, but does not change what they condition on. The per-token ratio remains $\pi_\theta(y_{i,t} \mid g(x), y_{i,<t})\,/\,\pi_{\theta_{\mathrm{old}}}(y_{i,t} \mid g(x), y_{i,<t})$, with $g(x)$ in both numerator and denominator, so the update still targets $J^{\mathrm{guide}}(\theta)$ rather than $J(\theta)$. We verify this failure mode empirically in Section~\ref{sec:main_results}.
\end{remark}

Table~\ref{tab:four_objectives} summarizes the contemporary landscape of approaches along three axes: guided sampling (for overcoming learning cliff), off-context correction, and effective objective. Table~\ref{tab:four_objectives} highlights that \algoname{} is the unique combination that achieves all three desiderata.
\begin{table}[h]
\centering
\small
\setlength{\tabcolsep}{4pt}
\renewcommand{\arraystretch}{1.6}
\renewcommand{\tabularxcolumn}[1]{m{#1}}
\begin{tabularx}{\textwidth}{>{\raggedright\arraybackslash}m{0.42\textwidth}*{3}{>{\centering\arraybackslash}X}}
\toprule
\textbf{Method}
& \makecell[c]{\textbf{Guided}\\\textbf{Sampling}}
& \makecell[c]{\textbf{Off-Context}\\\textbf{Correction}}
& \makecell[c]{\textbf{Effective}\\\textbf{Objective is $J(\theta)$}} \\
\midrule
Vanilla GRPO &
\textcolor{red}{\ding{55}} &
\textcolor{red}{\ding{55}} &
\textcolor{green!60!black}{\ding{51}} \\
Guided, on-context\newline
{\scriptsize\citet{qu2026pope, zhangbread}\newline
\citet{setlur2026reuse, amani2025rl}} &
\textcolor{green!60!black}{\ding{51}} &
\textcolor{red}{\ding{55}} &
\textcolor{red}{\ding{55}}\\
Guided, no correction &
\textcolor{green!60!black}{\ding{51}} &
\textcolor{red}{\ding{55}} &
\textcolor{red}{\ding{55}}\\
\addlinespace[0.35em]
\rowcolor{green!6}
\textbf{OC-GRPO (ours)} &
\textcolor{green!60!black}{\ding{51}} &
\textcolor{green!60!black}{\ding{51}} &
\textcolor{green!60!black}{\ding{51}}\\
\bottomrule
\end{tabularx}
\vspace{0.5em}
\caption{Comparison of GRPO-based methods along three axes: whether
guided sampling is used to address the learning cliff, whether the
off-context shift is corrected in the update rule, and whether the resulting effective objective is aligned to the target objective. \algoname{} is the only method that combines
guided sampling with off-context correction, recovering $J(\theta)$
as the optimization target. Appendix~\ref{app:toy_example} gives a toy example where guided-target optimization selects the opposite policy from $J(\theta)$; Appendix~\ref{app:collapse} demonstrates the instability of uncorrected guided training.}
\label{tab:four_objectives}
\end{table}
Replacing $\rho_{i,t}(\theta)$ in~\eqref{eq:grpo} with $\rho^{\mathrm{oc}}_{i,t}(\theta)$ from~\eqref{eq:ic_trajectory_ratio} gives the \algoname{} surrogate:
\begin{equation}
\label{eq:oc_grpo_loss}
\begin{aligned}
\mathcal{L}_{\algoname}(\theta)
&=
\mathbb{E}_{x \sim \mathcal{D}}
\left[
\frac{1}{G}\sum_{i=1}^{G}
\frac{1}{T_i}\sum_{t=1}^{T_i}
\left\{
\min\!\left(
\rho^{\mathrm{oc}}_{i,t}(\theta)\,\widehat{A}_i,\;
\mathrm{clip}\!\left(\rho^{\mathrm{oc}}_{i,t}(\theta),\, 1{-}\epsilon,\, 1{+}\epsilon\right)\widehat{A}_i
\right)
- \beta\,\mathrm{KL}_{i,t}(\theta)
\right\}
\right].
\end{aligned}
\end{equation}

\section{Theoretical Understanding}
\label{sec:understanding_oc_grpo}

We now show that the importance correction in \eqref{eq:ic_trajectory_ratio} has a more intuitive interpretation than simply to correct the off-context distribution mismatch. In fact, it induces a \emph{provably} well-behaved, behavior-aware credit assignment mechanism, automatically adjusting the gradient contributions based on how much the guidance influenced each trajectory. For simplicity, the analysis in this subsection is response-level, where we treat a sampled completion $y=(y_1,\ldots,y_T)$ as the unit that receives verifier reward and group-relative advantage.

Let $\mathcal{Y}^+=\{y:r(x,y)=1\}$ and $\mathcal{Y}^-=\{y:r(x,y)=0\}$ refer to the responses partitioned based on their correctness. Recall from~\eqref{eq:p_x} and~\eqref{eq:q_x} that, for a hard problem $x$,
\[
    p(x) = \mathbb{E}_{y\sim\pi_{\theta_{\mathrm{old}}}(\cdot|x)}[r(x,y)] \approx 0,
    \qquad
    q(x) = \mathbb{E}_{y\sim\pi_{\theta_{\mathrm{old}}}(\cdot|g(x))}[r(x,y)],
\]
for the expected verifier rewards under the original and guided prompts respectively. For this response-level calculation, set $\theta=\theta_{\mathrm{old}}$. We use the centered advantage
\[
\widehat{A}(x,y)
:=
r(x,y)-q(x)
=
\begin{cases}
1-q(x), & y\in\mathcal{Y}^+,\\
-q(x), & y\in\mathcal{Y}^-,
\end{cases}
\]
without within-group standard-deviation normalization. Define the response-level correction multiplier and aggregate correction factors
\[
\begin{aligned}
\rho_{\mathrm{resp}}^{\mathrm{oc}}(y)
&:=
\frac{\pi_{\theta_{\mathrm{old}}}(y\mid x)}
     {\pi_{\theta_{\mathrm{old}}}(y\mid g(x))},
\quad
\lambda_+(x,g)
:=
\mathbb{E}\!\left[\rho_{\mathrm{resp}}^{\mathrm{oc}}(y)\;\middle|\; y\in\mathcal{Y}^+\right],
\quad
\lambda_-(x,g)
:=
\mathbb{E}\!\left[\rho_{\mathrm{resp}}^{\mathrm{oc}}(y)\;\middle|\; y\in\mathcal{Y}^-\right].
\end{aligned}
\]
The quantities $\lambda_+(x,g)$ and $\lambda_-(x,g)$ are conditional averages of the response-level correction multiplier over successful and failed guided rollouts, respectively. Pointwise, $\rho_{\mathrm{resp}}^{\mathrm{oc}}(y)>1$ means the original prompt assigns higher probability to the same response $y$ than the guided prompt does; $\rho_{\mathrm{resp}}^{\mathrm{oc}}(y)<1$ means the guided prompt assigns higher probability to $y$; and $\rho_{\mathrm{resp}}^{\mathrm{oc}}(y)=1$ means the two probabilities match.
We also use the ``ideal'' \emph{response-level} surrogate, without clipping or
token-level averaging,
\[
\mathcal{L}_{\mathrm{OC}\text{-}\mathrm{GRPO}}^{\mathrm{resp}}(\theta)
:=
\mathbb{E}_{x\sim\mathcal{D}}
\left[
\mathbb{E}_{y\sim\pi_{\theta_{\mathrm{old}}}(\cdot\mid g(x))}
\!\left[
\frac{\pi_\theta(y\mid x)}
     {\pi_{\theta_{\mathrm{old}}}(y\mid g(x))}
\widehat{A}(x,y)
\right]
\right].
\]
At $\theta=\theta_{\mathrm{old}}$, the prefactor in the inner expectation is
$\rho_{\mathrm{resp}}^{\mathrm{oc}}(y)$.
We first isolate the aggregate effect of the correction on successful and failed guided rollouts.
The proofs of Lemma~\ref{lem:lambda_factors} and Theorem~\ref{thm:credit_assignment} are deferred to Appendix~\ref{app:missing_proofs}.
\begin{lemma}[Aggregate correction factors]
\label{lem:lambda_factors}
Consider a hard problem $x$ with $0<p(x)<q(x)<1$, and assume
$\pi_{\theta_{\mathrm{old}}}(y\mid g(x))>0$ whenever
$\pi_{\theta_{\mathrm{old}}}(y\mid x)>0$. Then, with all expectations taken over
guided rollouts $y\sim\pi_{\theta_{\mathrm{old}}}(\cdot\mid g(x))$,
\[
\lambda_+(x,g)=\frac{p(x)}{q(x)}<1,
\qquad
\lambda_-(x,g)=\frac{1-p(x)}{1-q(x)}>1.
\]
\end{lemma}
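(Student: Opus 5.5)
The plan is to write the conditional expectations as ratios of importance-weighted sums and apply the change of measure from Section~\ref{sec:oc_grpo} separately on $\mathcal{Y}^+$ and $\mathcal{Y}^-$. Abbreviate $\pi_x(y):=\pi_{\theta_{\mathrm{old}}}(y\mid x)$ and $\pi_g(y):=\pi_{\theta_{\mathrm{old}}}(y\mid g(x))$. Under guided sampling, the conditioning event $y\in\mathcal{Y}^+$ has probability $\sum_{y\in\mathcal{Y}^+}\pi_g(y)=q(x)$, which is positive by assumption. Hence
\[
\lambda_+(x,g)=\frac{1}{q(x)}\sum_{y\in\mathcal{Y}^+,\,\pi_g(y)>0}\pi_g(y)\,\frac{\pi_x(y)}{\pi_g(y)}=\frac{1}{q(x)}\sum_{y\in\mathcal{Y}^+,\,\pi_g(y)>0}\pi_x(y).
\]

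The key step is to show that the restricted sum equals $p(x)$. This is where the support assumption enters: $\pi_x(y)>0$ implies $\pi_g(y)>0$, so dropping the terms with $\pi_g(y)=0$ loses no mass of $\pi_x$. The sum is therefore $\sum_{y\in\mathcal{Y}^+}\pi_x(y)=\mathbb{E}_{y\sim\pi_x}[r(x,y)]=p(x)$, which gives $\lambda_+=p(x)/q(x)$.

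The same computation on $\mathcal{Y}^-$ applies. The event has guided probability $1-q(x)>0$, and its unguided mass is $1-p(x)$ because $r$ is binary, so $\mathcal{Y}^-$ is the complement of $\mathcal{Y}^+$. This yields $\lambda_-=(1-p(x))/(1-q(x))$. The two inequalities then follow immediately from $0<p(x)<q(x)<1$: we get $p/q<1$, and since $1-p>1-q>0$ we get $(1-p)/(1-q)>1$.

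The only delicate point is the bookkeeping for responses where $\pi_g(y)=0$. The importance ratio is undefined there, but these responses are never sampled, and the absolute-continuity hypothesis ensures they carry no $\pi_x$-mass. I would state this explicitly. If the response space is continuous or countably infinite, the sums become integrals against a dominating measure, and the argument is unchanged.
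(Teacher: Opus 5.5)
Your proposal is correct and follows essentially the same route as the paper: you write each conditional expectation under the guided law as a normalized sum, cancel $\pi_g$ against the importance ratio on $\{\pi_g>0\}$, and invoke the support assumption to recover the full unguided mass $p(x)$ (resp.\ $1-p(x)$), just as the paper does with its restricted set $\mathcal{Y}^+_g$. You also state outright that the $\mathcal{Y}^-$ masses are $1-p(x)$ and $1-q(x)$ because $r$ is binary, a step the paper leaves implicit.
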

Lemma~\ref{lem:lambda_factors} gives a direct credit-assignment interpretation, formalized in Theorem~\ref{thm:credit_assignment} below.

\begin{theorem}[\algoname{} Credit Assignment]
\label{thm:credit_assignment}
Under the conditions of Lemma~\ref{lem:lambda_factors}, the following hold:
\begin{enumerate}[label=\textup{(\roman*)}]
    \item \textbf{Systematic credit redistribution.}
    The response-level correction suppresses expected credit on successful guided rollouts,
    \[
    \mathbb{E}\!\left[\rho_{\mathrm{resp}}^{\mathrm{oc}}(y)\widehat{A}(x,y)\;\middle|\; y\in\mathcal{Y}^+\right]
    <
    \mathbb{E}\!\left[\widehat{A}(x,y)\;\middle|\; y\in\mathcal{Y}^+\right],
    \]
    and increases the expected penalty magnitude on failed guided rollouts,
    \[
    \Bigl\lvert\,\mathbb{E}\!\left[\rho_{\mathrm{resp}}^{\mathrm{oc}}(y)\widehat{A}(x,y)\;\middle|\; y\in\mathcal{Y}^-\right]\,\Bigr\rvert
    >
    \Bigl\lvert\,\mathbb{E}\!\left[\widehat{A}(x,y)\;\middle|\; y\in\mathcal{Y}^-\right]\,\Bigr\rvert.
    \]

    \item \textbf{Gradient decomposition.}
    Define
$s_x(y):=\left.\nabla_\theta\log\pi_\theta(y\mid x)\right|_{\theta=\theta_{\mathrm{old}}}$. The gradient of the response-level surrogate satisfies the following decomposition:
    \begin{equation}
    \label{eq:resp_grad_decomp}
    \begin{aligned}
    \left.\nabla_\theta
    \mathcal{L}_{\mathrm{OC}\text{-}\mathrm{GRPO}}^{\mathrm{resp}}(\theta)
    \right|_{\theta=\theta_{\mathrm{old}}}
    &=
    \mathbb{E}_{x\sim\mathcal{D}}
    \Bigg[
    \underbrace{\lambda_+(x,g)(1-q(x))}_{\text{success weight}}
    \cdot q(x)\cdot
    \mathbb{E}_{y\sim\pi_{\theta_{\mathrm{old}}}(\cdot\mid x)}\!\left[
    s_x(y)\;\middle|\; y\in\mathcal{Y}^+
    \right] \\
    &\hspace{4.6em}
    -
    \underbrace{\lambda_-(x,g)q(x)}_{\text{failure weight}}
    \cdot (1-q(x))\cdot
    \mathbb{E}_{y\sim\pi_{\theta_{\mathrm{old}}}(\cdot\mid x)}\!\left[
    s_x(y)\;\middle|\; y\in\mathcal{Y}^-
    \right]
    \Bigg].
    \end{aligned}
    \end{equation}
    For hard prompts with $p(x)\approx0$, standard unguided GRPO has little reward
    variation, while \eqref{eq:resp_grad_decomp} still produces a failure-side update
    whenever $q(x)>0$ and the failure-conditioned score average is nonzero.
\end{enumerate}
\end{theorem}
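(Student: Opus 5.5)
The plan is to derive both parts directly from Lemma~\ref{lem:lambda_factors}. The one observation that does most of the work is that, at $\theta=\theta_{\mathrm{old}}$, the centered advantage $\widehat{A}(x,y)$ is constant on each of $\mathcal{Y}^+$ and $\mathcal{Y}^-$. Throughout, I fix a hard problem $x$ with $0<p(x)<q(x)<1$. The outer expectation over $x\sim\mathcal{D}$ is handled by linearity at the end.

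\textbf{Part (i).} On $\mathcal{Y}^+$ we have $\widehat{A}=1-q(x)$, a constant, so it can be pulled out of the conditional expectation. This gives
\[
\mathbb{E}\bigl[\rho_{\mathrm{resp}}^{\mathrm{oc}}\widehat{A}\mid\mathcal{Y}^+\bigr]=(1-q)\lambda_+=(1-q)\tfrac{p}{q}.
\]
This is strictly less than $1-q=\mathbb{E}[\widehat{A}\mid\mathcal{Y}^+]$, because $1-q>0$ and $p/q<1$. On $\mathcal{Y}^-$ we have $\widehat{A}=-q$, so the corrected conditional expectation equals $-q\lambda_-$. Its absolute value is $q\lambda_-$, which exceeds $q=\lvert\mathbb{E}[\widehat{A}\mid\mathcal{Y}^-]\rvert$ because $q>0$ and $\lambda_->1$ by the Lemma. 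The conditional expectations under the guided distribution are well defined since $0<q<1$, so both $\mathcal{Y}^+$ and $\mathcal{Y}^-$ have positive guided mass.

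\textbf{Part (ii).} I will differentiate the response-level surrogate under the expectation. This is justified for a finite (or length-bounded) response space, where the sum is finite and $\pi_\theta$ is smooth. Since $\widehat{A}$ and the denominator do not depend on $\theta$, and $\nabla_\theta\pi_\theta(y\mid x)=\pi_\theta(y\mid x)\,\nabla_\theta\log\pi_\theta(y\mid x)$, evaluating at $\theta_{\mathrm{old}}$ gives
\[
\mathbb{E}_{y\sim\pi_{\theta_{\mathrm{old}}}(\cdot\mid g(x))}\bigl[\rho_{\mathrm{resp}}^{\mathrm{oc}}(y)\,s_x(y)\,\widehat{A}(x,y)\bigr].
\]
Next I will undo the importance weight, which converts this into $\mathbb{E}_{y\sim\pi_{\theta_{\mathrm{old}}}(\cdot\mid x)}[s_x(y)\widehat{A}(x,y)]$. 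The support assumption (every $y$ with positive unguided mass also has positive guided mass) is exactly what this change of measure needs. I then split on correctness under the unguided distribution. Because $0<p<1$, both conditional events have positive probability, and the expression becomes
\[
(1-q)\,p\,\mathbb{E}_{x}[s_x\mid\mathcal{Y}^+]\;-\;q\,(1-p)\,\mathbb{E}_{x}[s_x\mid\mathcal{Y}^-],
\]
where $\mathbb{E}_x$ denotes expectation under $\pi_{\theta_{\mathrm{old}}}(\cdot\mid x)$. Substituting the Lemma's identities $p=\lambda_+q$ and $1-p=\lambda_-(1-q)$ yields exactly the success-weight and failure-weight form of \eqref{eq:resp_grad_decomp}. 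Averaging over $x\sim\mathcal{D}$ completes the identity. The closing remark then follows: the failure coefficient $\lambda_-q(1-q)=q(1-p)$ stays bounded away from zero when $p\approx0$ and $q>0$. So the failure-side term is nonzero whenever the failure-conditioned score average is nonzero, even where unguided GRPO sees almost no reward variance.

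\textbf{Main obstacle.} The only delicate step is the measure-theoretic bookkeeping in part (ii). Two things need care there: justifying the exchange of gradient and expectation, and recognising that the change of measure requires absolute continuity of $\pi_{\theta_{\mathrm{old}}}(\cdot\mid x)$ with respect to $\pi_{\theta_{\mathrm{old}}}(\cdot\mid g(x))$, not the reverse. I would settle the first by working over the finite set of token sequences up to a maximum length. The second is precisely the Lemma's support hypothesis.
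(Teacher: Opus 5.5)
Your proof is correct, and part (i) is the paper's argument. In part (ii) you take the same two steps in the opposite order. Write $P=\pi_{\theta_{\mathrm{old}}}(\cdot\mid x)$ and $Q=\pi_{\theta_{\mathrm{old}}}(\cdot\mid g(x))$. The paper first splits $\mathbb{E}_{Q}[\rho^{\mathrm{oc}}_{\mathrm{resp}}\widehat{A}\,s_x]$ by correctness under the guided law $Q$, and then changes measure inside each branch. The factors $P(S)/Q(S)=\lambda_\pm$ are therefore produced by the correction itself, in effect re-deriving Lemma~\ref{lem:lambda_factors} inside the computation, and this is what supports the credit-assignment reading. You instead change measure globally first, obtaining $\mathbb{E}_{P}[s_x(r-q)]$, then split under the unguided law $P$. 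You use the Lemma only as the algebraic identities $p=\lambda_+q$ and $1-p=\lambda_-(1-q)$. Your ordering is cleaner. It shows that the corrected gradient at $\theta_{\mathrm{old}}$ is just the unguided on-policy gradient with constant baseline $q(x)$, which links part (ii) directly to Theorem~\ref{thm:oc_grpo_unbiased}.

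Your route also yields one further line that your closing remark should take into account. By the score identity $\mathbb{E}_P[s_x]=0$, which holds in your finite-response setting, the failure-conditioned average is forced to be $\mathbb{E}_P[s_x\mid\mathcal{Y}^-]=-\tfrac{p}{1-p}\,\mathbb{E}_P[s_x\mid\mathcal{Y}^+]$. Hence the failure-side term equals $q\,p\,\mathbb{E}_P[s_x\mid\mathcal{Y}^+]$, which is $q(x)$ times the unguided gradient $\mathbb{E}_P[r\,s_x]$. So a failure coefficient bounded away from zero does not give a non-negligible failure-side signal when $p\approx 0$. The closing claim is true, but only in its literal conditional form.
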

 
Part~\textup{(ii)} shows where the correction from part~\textup{(i)} enters the gradient. The success branch is scaled by $\lambda_+(x,g)=p(x)/q(x)$, while the failure branch is scaled by $\lambda_-(x,g)=(1-p(x))/(1-q(x))$. The importance ratio therefore acts as a behavior-aware credit assignment mechanism, where the correction multiplier encodes the guidance's influence on each trajectory and automatically adjusts gradient contributions accordingly without any additional design choices. Since $\pi_\theta$ is conditioned only on $x$ throughout (never on $g(x)$), the privileged information $g$ plays no role at inference time, and the training and deployment objectives remain exactly aligned.

The following example illustrates this response-level mechanism for light and large hints.
\begin{example}[Response-level credit assignment]
In practice, the correction credits trajectories that solve a hard problem without leaning too heavily on the guidance. Consider the shared problem $x$, ``Prove $|a\!\cdot\!b| \le \|a\|\,\|b\|$,'' together with three guidance strategies $g_1,g_2,g_3$.
\begin{itemize}
    \item \textbf{Light hint, successful rollout.} The guidance says ``consider $\|a-b\|^2\ge0$'', and the response $y$ expands $(a-b)\!\cdot\!(a-b)$, rearranges, and reaches the conclusion. The model essentially solves the problem on its own, so the guidance barely shifts the likelihood of $y$. The correction multiplier is close to one, retaining full credit.
    \item \textbf{Light hint, failed rollout.} The guidance gives a similarly light hint, but the model goes off-track without using it. Because this failing trajectory does not rely on the guidance, the same ratio is again close to one, so the full penalty is preserved.
    \item \textbf{Large hint, successful rollout.} The guidance gives away the solution, ``$\|a-b\|^2\!\ge\!0 \Rightarrow a\!\cdot\!b \le \tfrac{1}{2}(\|a\|^2{+}\|b\|^2) \Rightarrow$ AM--GM $\Rightarrow$ result'', which the response reproduces verbatim. Here the guidance is almost entirely responsible for $y$, so the correction multiplier is much smaller than one, heavily damping the credit even though the trajectory is correct.
\end{itemize}
The correction preserves the learning signal when the model succeeds or fails on its own merits, and suppresses it when a success is inherited from the guidance.
\end{example}
In Appendix~\ref{app:variance_proofs}, we also analyze the variance of the correction ratio as a function of the guidance length.


%% file: empirical_results.tex
\section{Empirical Results}
\label{sec:main_results}
We evaluate \algoname{} on math reasoning benchmarks to assess whether importance-corrected updates can reliably learn from privileged guidance. 
We describe the experimental setup below and present results in Section~\ref{sec:exp_results}.

\subsection{Implementing \algoname{}}
\label{sec:implementation}

\algoname{} requires three components: (i) identifying which problems need 
guidance and at what level, (ii) constructing the augmented training dataset, 
and (iii) running the importance-corrected update.

\paragraph{Identifying hard problems.} We consider the MATH dataset~\citep{hendrycksmath2021}, which includes full reference solutions. A problem $x$ is hard if $\pi_{\mathrm{ref}}$ is unable to solve it in $64$ attempts. We start from all the hard problems in MATH Levels 3, 4, and 5.

\paragraph{Using cascaded guidance.} Given a hard problem $x \in \mathcal{D}$ with reference solution $z^\star(x) = (z^\star_1, \ldots, z^\star_{T^\star})$, we define $L = 5$ guidance levels by taking solution prefixes at fixed length fractions. Let $h_\ell(x)$ denote the prefix text and $g_\ell(x)$ denote the corresponding rollout prompt:
\begin{equation}\label{eq:prefix_levels}
h_\ell(x) \;:=\; z^\star_{1:\lceil \alpha_\ell \cdot T^\star \rceil}(x),
\qquad
g_\ell(x) \;:=\; (x,h_\ell(x)),
\qquad \alpha_\ell \in \{0.2,\, 0.4,\, 0.6,\, 0.8,\, 1.0\}.
\end{equation}
Levels are strictly nested: $h_\ell(x)$ contains all the information in $h_{\ell-1}(x)$ plus an additional chunk of the reference solution. Alternative guidance families (self-hints, frontier hints) are studied in Appendix~\ref{app:hint_families}, and the same methodology and theoretical results apply to those settings.

\paragraph{Fixed guidance via \algonameFixed{}.} \algonameFixed{} constructs an augmented training dataset $\mathcal{D}_{\mathrm{aug}}$ \emph{before} RLVR training begins, by identifying the minimum guidance level needed for each hard problem to yield at least one correct rollout under the base model $\pi_{\mathrm{ref}}$. For each $x \in \mathcal{D}$, we start from $\ell = 1$, sample 8 rollouts from $\pi_{\mathrm{ref}}(\cdot \mid g_\ell(x))$; if any succeeds, record $\ell^\star(x) = \ell$ and stop; otherwise adjust to $\ell + 1$. The augmented dataset is then
\begin{equation*}\label{eq:augmented_dataset}
\mathcal{D}_{\mathrm{aug}} \;=\; \mathcal{D} \;\cup\;
\bigl\{g_{\ell^\star(x)}(x) : x \in \mathcal{D}\bigr\}.
\end{equation*}
The guidance level $\ell^\star(x)$ is selected once using the base model and fixed throughout training. The policy never re-selects guidance as it evolves.

\paragraph{Adaptive guidance via \algonameAdaptive{}.}
A natural alternative is to adjust guidance adaptively during training, using the current policy $\pi_{\theta_t}$ to re-evaluate which problems need guidance at each step. We refer to this as \algonameAdaptive{} and defer the full algorithm to Appendix~\ref{app:oc_grpo_on}. Adaptive adjustment can in principle provide more targeted guidance as the policy improves, but incurs additional inference cost at every training step. We report results for both variants, with \algonameFixed{} as our recommended default: it is simpler, faster, and easier to reproduce, while remaining empirically competitive.



\subsection{Experimental Setup}
We construct a hard dataset $\mathcal{D}$ from the MATH training
split (Levels 3--5), retaining problems on which all 64 rollouts under
$\pi_{\mathrm{ref}}$ fail; this yields 595 hard problems on which
vanilla GRPO receives near-zero learning signal. We train
Qwen2.5-7B-Instruct \citep{qwen2.5} with LoRA adapters via
veRL~\citep{sheng2024hybridflow} for 4 epochs (16 rollouts/prompt, 3
seeds). We evaluate Pass@1 and Pass@16\footnote{Pass@$k$ is the probability that at least one of $k$ sampled responses solves the problem. We compute it with the standard unbiased estimator~\citep{chen2021codex}: given $n \ge k$ samples per problem of which $c$ are correct, $\mathrm{pass}@k = \mathbb{E}\bigl[1 - \binom{n-c}{k}/\binom{n}{k}\bigr]$; we use $n=16$.} on AIME (1983--2026),
Gaokao2023, and OmniMath. Full details of the setup are in Appendix~\ref{app:experimental_details}.

\paragraph{Baselines.}
We compare \algonameOff{} and \algonameOn{} against four baselines\footnote{We implement each baseline in our setting with the same $\mathcal{D}_{\mathrm{aug}}$ across all methods. We mark these baselines with a * suffix.}.
Each method's optimization objective is given explicitly to clarify
the comparison. \textbf{Vanilla GRPO} optimizes the unguided objective
$J(\theta) = \mathbb{E}_{x \sim \mathcal{D}}
\mathbb{E}_{y \sim \pi_\theta(\cdot\mid x)}[r(x,y)]$ but receives near-
zero signal because every $x$ has $\mathrm{pass}@64 = 0$ under
$\pi_{\mathrm{ref}}$. \textbf{POPE*}~\citep{qu2026pope} (offline) and \textbf{BREAD*}~\citep{zhangbread} (online) sample from $g(x)$
\emph{and} score under $g(x)$ in both numerator and denominator,
optimizing a misaligned \emph{guided} objective
$J^{\mathrm{guide}}(\theta) := \mathbb{E}_{x \sim \mathcal{D}}
\mathbb{E}_{y \sim \pi_\theta(\cdot\mid g(x))}[r(x,y)]$, rather than $J(\theta)$. Appendix~\ref{app:toy_example} gives a toy example where the two objectives prefer opposite policies. \textbf{PrefixRL*}~\citep{setlur2026reuse} injects solution prefixes
in the \emph{output} space rather than the prompt, with gradients
masked on the prefix tokens. Although the mechanism differs, the
optimized objective is similar to those in~\cite{qu2026pope,zhangbread}: the policy is trained
to complete from a forced prefix endpoint, and at inference time the prefix is absent. See
Appendix~\ref{app:experimental_details} for full empirical implementation details.

\subsection{Results}
\label{sec:exp_results}
\begin{table*}[t]
\centering
\scriptsize
\setlength{\tabcolsep}{3.2pt}
\renewcommand{\arraystretch}{1.52}
\renewcommand{\tabularxcolumn}[1]{m{#1}}
\begin{tabularx}{\textwidth}{>{\raggedright\arraybackslash}p{0.17\textwidth}*{2}{>{\centering\arraybackslash}X}@{\hspace{7pt}}*{8}{>{\centering\arraybackslash}X}}
\toprule
\textbf{Method}
& \multicolumn{2}{c}{\textbf{Training}}
& \multicolumn{2}{c}{\makecell{\textbf{AIME}\\\textbf{(1983--2026)}}}
& \multicolumn{2}{c}{\textbf{Gaokao2023}}
& \multicolumn{2}{c}{\textbf{OmniMath}}
& \multicolumn{2}{c}{\textbf{Average}} \\
\cmidrule(r){2-3}\cmidrule(lr){4-5}\cmidrule(lr){6-7}\cmidrule(lr){8-9}\cmidrule(lr){10-11}
& \textbf{P@1} & \textbf{P@16}
& \textbf{P@1} & \textbf{P@16}
& \textbf{P@1} & \textbf{P@16}
& \textbf{P@1} & \textbf{P@16}
& \textbf{P@1} & \textbf{P@16} \\
\midrule

\rowcolor{gray!10}
\multicolumn{11}{l}{\textbf{Baselines}} \\
\addlinespace[0.15em]

Base
& 0.7 & 4.7
& 16.8 & 37.5
& 44.2 & 64.7
& 19.5 & 44.8
& 26.8 & 49.0 \\

Vanilla GRPO~{\tiny\citep{shao2024deepseekmath} (ref.)}
& 1.0$\pm$0.2 & 7.0$\pm$0.6
& 17.5$\pm$0.6 & 38.8$\pm$1.2
& 44.0$\pm$3.2 & 64.2$\pm$2.5
& 22.0$\pm$0.8 & \textbf{45.9$\pm$1.2}
& 27.8 & 49.6 \\

\midrule
\addlinespace[0.20em]
\rowcolor{gray!10}
\multicolumn{11}{c}{\textbf{Before-Training Guidance}} \\
\addlinespace[0.30em]

PrefixRL*~{\tiny\citep{setlur2026reuse}}
& \rescell{4.0$\pm$1.4}{(+300.0\%)} & \rescell{11.8$\pm$0.8}{(+68.6\%)}
& \rescell{18.6$\pm$0.7}{(+6.3\%)} & \rescell{40.8$\pm$0.6}{(+5.1\%)}
& \rescell{43.9$\pm$1.8}{(-0.2\%)} & \rescell{65.2$\pm$1.7}{(+1.5\%)}
& \rescell{22.6$\pm$0.8}{(+2.9\%)} & \rescell{44.9$\pm$0.1}{(-2.1\%)}
& \rescell{28.4}{(+1.9\%)} & \rescell{50.3}{(+1.3\%)} \\

POPE*~{\tiny\citep{qu2026pope}}
& \rescell{\textbf{6.8$\pm$0.4}}{(+580.0\%)} & \rescell{13.2$\pm$0.5}{(+88.6\%)}
& \rescell{18.7$\pm$0.2}{(+6.5\%)} & \rescell{\textbf{41.9$\pm$1.0}}{(+8.0\%)}
& \rescell{48.9$\pm$2.1}{(+11.2\%)} & \rescell{67.2$\pm$1.6}{(+4.6\%)}
& \rescell{23.2$\pm$0.8}{(+5.6\%)} & \rescell{45.0$\pm$0.4}{(-1.8\%)}
& \rescell{30.3}{(+8.7\%)} & \rescell{\textbf{51.4}}{(+3.5\%)} \\

\rowcolor{green!5}
OC-GRPO-Fixed
& \rescell{6.2$\pm$1.4}{(+520.0\%)} & \rescell{\textbf{14.8$\pm$0.3}}{(+111.4\%)}
& \rescell{19.1$\pm$0.2}{(+9.2\%)} & \rescell{41.7$\pm$0.2}{(+7.4\%)}
& \rescell{\textbf{51.6$\pm$2.3}}{(+17.3\%)} & \rescell{\textbf{68.2$\pm$1.0}}{(+6.2\%)}
& \rescell{\textbf{24.3$\pm$0.7}}{(+10.8\%)} & \rescell{44.3$\pm$1.0}{(-3.5\%)}
& \rescell{\textbf{31.7}}{(+13.8\%)} & \rescell{\textbf{51.4}}{(+3.6\%)} \\

\addlinespace[0.35em]
\midrule
\addlinespace[0.20em]
\rowcolor{gray!10}
\multicolumn{11}{c}{\textbf{During-Training Guidance}} \\
\addlinespace[0.30em]

BREAD*~{\tiny\citep{zhangbread}}
& \rescell{5.7$\pm$0.3}{(+470.0\%)} & \rescell{13.8$\pm$1.2}{(+97.1\%)}
& \rescell{\textbf{19.3$\pm$0.3}}{(+10.0\%)} & \rescell{41.6$\pm$0.8}{(+7.1\%)}
& \rescell{46.0$\pm$1.2}{(+4.5\%)} & \rescell{65.1$\pm$0.5}{(+1.3\%)}
& \rescell{23.3$\pm$0.0}{(+6.1\%)} & \rescell{45.5$\pm$0.4}{(-0.8\%)}
& \rescell{29.5}{(+6.1\%)} & \rescell{50.7}{(+2.2\%)} \\

\rowcolor{green!5}
OC-GRPO-Adaptive
& \rescell{2.5$\pm$0.7}{(+150.0\%)} & \rescell{10.0$\pm$0.3}{(+42.9\%)}
& \rescell{18.0$\pm$0.5}{(+2.5\%)} & \rescell{39.8$\pm$0.8}{(+2.6\%)}
& \rescell{50.5$\pm$0.1}{(+14.8\%)} & \rescell{67.5$\pm$0.8}{(+5.1\%)}
& \rescell{23.9$\pm$0.3}{(+8.8\%)} & \rescell{45.6$\pm$0.8}{(-0.5\%)}
& \rescell{30.8}{(+10.7\%)} & \rescell{51.0}{(+2.7\%)} \\
\bottomrule
\end{tabularx}
\caption{Main results on Qwen2.5-7B-Instruct. Each non-base cell reports the seed-averaged (3 seeds) mean$\pm$std on the first line and, on the second line, the relative improvement over Vanilla GRPO, which serves as the reference (ref.) for all comparisons. The Training columns report Pass@1 and Pass@16 on the 595 hard training problems; Average is computed over AIME (1983--2026)~\citep{aime_1983_2024,balunovic_srimatharena_2025}, Gaokao2023~\citep{zhang2023evaluating}, and OmniMath~\citep{gao2024omnimathuniversalolympiadlevel}. In each column, the method with the best mean is bolded; bolding considers the mean alone, with no adjustment for seed variance, and methods with equal means are both bolded.}
\label{tab: main table}
\end{table*}

\begin{table*}[t]
\centering
\scriptsize
\setlength{\tabcolsep}{3.2pt}
\renewcommand{\arraystretch}{1.37}
\renewcommand{\tabularxcolumn}[1]{m{#1}}
\begin{tabularx}{\textwidth}{>{\raggedright\arraybackslash}p{0.17\textwidth}*{2}{>{\centering\arraybackslash}X}@{\hspace{7pt}}*{8}{>{\centering\arraybackslash}X}}
\toprule
\textbf{Method}
& \multicolumn{2}{c}{\textbf{Training}}
& \multicolumn{2}{c}{\makecell{\textbf{AIME}\\\textbf{(1983--2026)}}}
& \multicolumn{2}{c}{\textbf{Gaokao2023}}
& \multicolumn{2}{c}{\textbf{OmniMath}}
& \multicolumn{2}{c}{\textbf{Average}} \\
\cmidrule(r){2-3}\cmidrule(lr){4-5}\cmidrule(lr){6-7}\cmidrule(lr){8-9}\cmidrule(lr){10-11}
& \textbf{P@1} & \textbf{P@16}
& \textbf{P@1} & \textbf{P@16}
& \textbf{P@1} & \textbf{P@16}
& \textbf{P@1} & \textbf{P@16}
& \textbf{P@1} & \textbf{P@16} \\
\midrule

\rowcolor{gray!10}
\multicolumn{11}{l}{\textbf{Baselines}} \\
\addlinespace[0.15em]

Base
& 0.3 & 5.0
& 9.2 & 29.2
& 38.2 & 64.4
& 18.5 & \textbf{44.2}
& 22.0 & 45.9 \\

Vanilla GRPO~{\tiny(ref.)}
& 1.0 & 6.2
& 10.2 & 30.8
& 41.6 & 63.6
& 18.6 & 43.5
& 23.5 & 46.0 \\

\midrule
\addlinespace[0.20em]
\rowcolor{gray!10}
\multicolumn{11}{c}{\textbf{Before-Training Guidance}} \\
\addlinespace[0.30em]

PrefixRL*
& 4.5 & 11.6
& 10.1 {\tiny(-1.0\%)} & \textbf{32.1} {\tiny(+4.2\%)}
& 41.3 {\tiny(-0.6\%)} & 64.4 {\tiny(+1.2\%)}
& 18.8 {\tiny(+1.1\%)} & 41.7 {\tiny(-4.1\%)}
& 23.4 {\tiny(-0.3\%)} & \textbf{46.1} {\tiny(+0.2\%)} \\

POPE*
& \textbf{6.1} & \textbf{14.3}
& 9.8 {\tiny(-4.0\%)} & 30.2 {\tiny(-2.0\%)}
& 43.4 {\tiny(+4.4\%)} & 62.3 {\tiny(-2.0\%)}
& 18.9 {\tiny(+1.6\%)} & 41.3 {\tiny(-5.1\%)}
& 24.0 {\tiny(+2.4\%)} & 44.6 {\tiny(-3.0\%)} \\

\rowcolor{green!5}
OC-GRPO-Fixed
& 1.3 & 9.4
& \textbf{11.0} {\tiny(+7.9\%)} & 30.6 {\tiny(-0.7\%)}
& \textbf{45.7} {\tiny(+10.0\%)} & 64.7 {\tiny(+1.6\%)}
& 18.8 {\tiny(+1.1\%)} & 42.0 {\tiny(-3.4\%)}
& \textbf{25.2} {\tiny(+7.2\%)} & 45.8 {\tiny(-0.4\%)} \\

\addlinespace[0.35em]
\midrule
\addlinespace[0.20em]
\rowcolor{gray!10}
\multicolumn{11}{c}{\textbf{During-Training Guidance}} \\
\addlinespace[0.30em]

BREAD*
& 4.7 & 13.3
& 9.1 {\tiny(-10.9\%)} & 31.3 {\tiny(+1.6\%)}
& 40.0 {\tiny(-3.8\%)} & 62.1 {\tiny(-2.4\%)}
& 18.9 {\tiny(+1.6\%)} & 42.6 {\tiny(-2.1\%)}
& 22.7 {\tiny(-3.4\%)} & 45.3 {\tiny(-1.4\%)} \\

\rowcolor{green!5}
OC-GRPO-Adaptive
& 1.7 & 6.1
& 10.7 {\tiny(+5.0\%)} & 29.3 {\tiny(-4.9\%)}
& 43.4 {\tiny(+4.4\%)} & \textbf{65.2} {\tiny(+2.4\%)}
& \textbf{19.2} {\tiny(+3.2\%)} & 42.3 {\tiny(-2.8\%)}
& 24.4 {\tiny(+4.1\%)} & 45.6 {\tiny(-0.8\%)} \\
\bottomrule
\end{tabularx}
\caption{Results with Qwen2.5-3B-Instruct (single seed). The empirical setting is identical to Table~\ref{tab: main table}.}

\label{tab:main_results_qwen3b_n16_epoch4_subset}
\end{table*}
\begin{table*}[t]
\centering
\scriptsize
\setlength{\tabcolsep}{3.2pt}
\renewcommand{\arraystretch}{1.37}
\renewcommand{\tabularxcolumn}[1]{m{#1}}
\begin{tabularx}{\textwidth}{>{\raggedright\arraybackslash}p{0.17\textwidth}*{2}{>{\centering\arraybackslash}X}@{\hspace{7pt}}*{8}{>{\centering\arraybackslash}X}}
\toprule
\textbf{Method}
& \multicolumn{2}{c}{\textbf{Training}}
& \multicolumn{2}{c}{\makecell{\textbf{AIME}\\\textbf{(1983--2026)}}}
& \multicolumn{2}{c}{\textbf{Gaokao2023}}
& \multicolumn{2}{c}{\textbf{OmniMath}}
& \multicolumn{2}{c}{\textbf{Average}} \\
\cmidrule(r){2-3}\cmidrule(lr){4-5}\cmidrule(lr){6-7}\cmidrule(lr){8-9}\cmidrule(lr){10-11}
& \textbf{P@1} & \textbf{P@16}
& \textbf{P@1} & \textbf{P@16}
& \textbf{P@1} & \textbf{P@16}
& \textbf{P@1} & \textbf{P@16}
& \textbf{P@1} & \textbf{P@16} \\
\midrule

\rowcolor{gray!10}
\multicolumn{11}{l}{\textbf{Baselines}} \\
\addlinespace[0.15em]

Base
& 1.2 & 7.3
& 3.8 & 18.7
& 34.8 & 56.9
& 13.0 & 37.6
& 17.2 & 37.7 \\

Vanilla GRPO~{\tiny(ref.)}
& 1.8 & 7.4
& 4.6 & 21.2
& 33.5 & 59.0
& 14.7 & 36.9
& 17.6 & 39.0 \\

\midrule
\addlinespace[0.20em]
\rowcolor{gray!10}
\multicolumn{11}{c}{\textbf{Before-Training Guidance}} \\
\addlinespace[0.30em]

PrefixRL*
& 2.0 & 11.3
& 3.9 {\tiny(-15.2\%)} & 20.7 {\tiny(-2.4\%)}
& 31.2 {\tiny(-7.0\%)} & 57.7 {\tiny(-2.2\%)}
& 14.0 {\tiny(-4.8\%)} & 37.7 {\tiny(+2.2\%)}
& 16.4 {\tiny(-7.2\%)} & 38.7 {\tiny(-0.9\%)} \\

POPE*
& \textbf{2.9} & 10.9
& 4.1 {\tiny(-10.9\%)} & 21.1 {\tiny(-0.5\%)}
& 33.8 {\tiny(+0.8\%)} & 59.7 {\tiny(+1.3\%)}
& 15.8 {\tiny(+7.5\%)} & 38.1 {\tiny(+3.3\%)}
& 17.9 {\tiny(+1.7\%)} & 39.6 {\tiny(+1.5\%)} \\

\rowcolor{green!5}
OC-GRPO-Fixed
& 2.5 & 10.4
& 4.2 {\tiny(-8.7\%)} & 19.5 {\tiny(-8.1\%)}
& \textbf{37.7} {\tiny(+12.4\%)} & 59.7 {\tiny(+1.3\%)}
& \textbf{16.3} {\tiny(+10.9\%)} & 36.4 {\tiny(-1.4\%)}
& \textbf{19.4} {\tiny(+10.2\%)} & 38.5 {\tiny(-1.4\%)} \\

\addlinespace[0.35em]
\midrule
\addlinespace[0.20em]
\rowcolor{gray!10}
\multicolumn{11}{c}{\textbf{During-Training Guidance}} \\
\addlinespace[0.30em]

BREAD*
& 2.4 & \textbf{11.8}
& 4.6 {\tiny(+0.0\%)} & 19.3 {\tiny(-9.0\%)}
& 33.2 {\tiny(-0.8\%)} & 60.0 {\tiny(+1.8\%)}
& 14.6 {\tiny(-0.7\%)} & \textbf{38.2} {\tiny(+3.5\%)}
& 17.5 {\tiny(-0.7\%)} & 39.2 {\tiny(+0.4\%)} \\

\rowcolor{green!5}
OC-GRPO-Adaptive
& 0.5 & 8.4
& \textbf{4.8} {\tiny(+4.3\%)} & \textbf{21.3} {\tiny(+0.5\%)}
& 35.8 {\tiny(+7.0\%)} & \textbf{62.9} {\tiny(+6.6\%)}
& 16.1 {\tiny(+9.5\%)} & \textbf{38.2} {\tiny(+3.5\%)}
& 18.9 {\tiny(+7.2\%)} & \textbf{40.8} {\tiny(+4.6\%)} \\
\bottomrule
\end{tabularx}
\caption{Results with Qwen2.5-1.5B-Instruct (single seed). The empirical setting is identical to Table~\ref{tab: main table}.}
\label{tab:main_results_qwen15b_n16_epoch4_subset}
\end{table*}
Table~\ref{tab: main table} reports the main results on Qwen2.5-7B-Instruct. \algonameOff{} achieves the best average Pass@1 of $31.7$, a $13.8\%$ relative gain over vanilla GRPO and $1.4$ absolute points ahead of the strongest guided baseline POPE* ($+8.7\%$). \algonameOn{} is competitive ($+10.7\%$) but trails its fixed counterpart with additional inference cost per training step. Across individual benchmarks, \algonameOff{} leads on Gaokao2023 and OmniMath while remaining competitive on AIME.

Tables~\ref{tab:main_results_qwen3b_n16_epoch4_subset} and~\ref{tab:main_results_qwen15b_n16_epoch4_subset} report results at 3B and 1.5B scale with identical hyperparameters. Two patterns emerge consistently. First, \algoname{} remains the strongest method at every scale, with \algonameOff{} achieving $+7.2\%$ at 3B and $+10.2\%$ at 1.5B over vanilla GRPO. Second, guided baselines that optimize a misaligned objective~\citep{qu2026pope, zhangbread, setlur2026reuse} degrade below vanilla GRPO at smaller scales: PrefixRL* drops $7.2\%$ at 1.5B and BREAD* drops $3.4\%$ at 3B while \algoname{} retains consistent gains. This suggests that back-generalization is capacity-dependent: larger models may absorb the objective mismatch, but smaller models cannot, making the importance correction increasingly critical at reduced scale.\looseness-1

\FloatBarrier

%% file: conclusion.tex
\section{Conclusion}
We identified the off-context problem in guided RLVR, where existing methods use privileged guidance to restore learning signal on hard problems but inadvertently optimize a different objective from the one used at deployment. \algoname{} corrects for this via a minimal per-token importance reweighting, so training can sample from guided prompts while still estimating the unguided objective. The same correction also gives behavior-aware credit assignment: it discounts trajectories that leaned heavily on guidance and preserves the penalty for failures that remain under guidance. Gains over guided baselines are consistent across model scales and grow at smaller sizes, where the objective mismatch is hardest to absorb.

A natural extension is multi-turn or agentic RLVR, where privileged context can take the form of tool-call results, intermediate environment states, or oracle subgoals accessible during training but absent at deployment. The off-context problem appears whenever the training-time context differs from the deployment-time context, and the per-token importance correction in \algoname{} generalizes directly. The broader lesson is that privileged information can guide exploration, but the update still has to account for the distribution it came from.

%% file: appendix.tex
\newpage
\appendix

\noindent{\textbf{Limitations.}} OC-GRPO is evaluated at academic scale, Qwen2.5 models up to 7B with LoRA adapters on a small A100 cluster, and behavior at frontier scale  may differ. Our experiments target mathematical reasoning with verifiable answer rewards, and the gains may not transfer uniformly to other RLVR settings such as code generation, agentic tool-use, or multimodal reasoning. Finally, while the off-context framework is mechanism-agnostic and we explore hint-style variants in Appendix~\ref{app:hint_families}, our main results rely on ground-truth solution prefixes as the privileged signal; behavior under partial, noisy, or adversarial guidance is left for future work.

\section{Missing Proofs}
\label{app:missing_proofs}

\subsection{\texorpdfstring{Proof of Lemma~\ref{lem:lambda_factors}}{Proof of Lemma}}
\label{app:proof_lambda_factors}

\begin{proof}
Write
$P(y)=\pi_{\theta_{\mathrm{old}}}(y\mid x)$ and
$Q(y)=\pi_{\theta_{\mathrm{old}}}(y\mid g(x))$. The support assumption lets us
restrict sums to responses with positive guided probability without dropping any
response with positive original-prompt probability. Let
$\mathcal{Y}^+_g=\{y\in\mathcal{Y}^+:Q(y)>0\}$. Then
\[
\begin{aligned}
\lambda_+(x,g)
&=
\mathbb{E}\!\left[\rho_{\mathrm{resp}}^{\mathrm{oc}}(y)\;\middle|\; y\in\mathcal{Y}^+\right] \\
&=
\sum_{y\in\mathcal{Y}^+_g}
\frac{Q(y)}{\sum_{z\in\mathcal{Y}^+}Q(z)}
\frac{P(y)}{Q(y)} \\
&=
\frac{\sum_{y\in\mathcal{Y}^+_g}P(y)}
     {\sum_{z\in\mathcal{Y}^+}Q(z)}
=
\frac{\sum_{y\in\mathcal{Y}^+}P(y)}
     {\sum_{z\in\mathcal{Y}^+}Q(z)} \\
&=
\frac{\Pr_{y\sim\pi_{\theta_{\mathrm{old}}}(\cdot\mid x)}(r(x,y)=1)}
     {\Pr_{y\sim\pi_{\theta_{\mathrm{old}}}(\cdot\mid g(x))}(r(x,y)=1)}
=
\frac{p(x)}{q(x)}.
\end{aligned}
\]
Since $p(x)<q(x)$, this ratio is less than one. The same calculation on
$\mathcal{Y}^-$ gives
\[
\lambda_-(x,g)
=
\frac{\sum_{y\in\mathcal{Y}^-}P(y)}
     {\sum_{y\in\mathcal{Y}^-}Q(y)}
=
\frac{1-p(x)}{1-q(x)}
>1,
\]
where the strict inequality again follows from $p(x)<q(x)$.
\end{proof}

\subsection{\texorpdfstring{Proof of Theorem~\ref{thm:credit_assignment}}{Proof of Theorem}}
\label{app:proof_credit_assignment}

\begin{proof}
We prove the two parts in order. By Lemma~\ref{lem:lambda_factors},
$\lambda_+(x,g)<1<\lambda_-(x,g)$. On
$\mathcal{Y}^+$, the centered advantage is constant and equal to $1-q(x)$, so
$\mathbb{E}[\widehat{A}(x,y)\mid y\in\mathcal{Y}^+]=1-q(x)>0$. Hence
\[
\begin{aligned}
\mathbb{E}\!\left[\rho_{\mathrm{resp}}^{\mathrm{oc}}(y)\widehat{A}(x,y)\;\middle|\; y\in\mathcal{Y}^+\right]
&=
(1-q(x))
\mathbb{E}\!\left[\rho_{\mathrm{resp}}^{\mathrm{oc}}(y)\;\middle|\; y\in\mathcal{Y}^+\right] \\
&=
\lambda_+(x,g)
\mathbb{E}\!\left[\widehat{A}(x,y)\;\middle|\; y\in\mathcal{Y}^+\right] \\
&<
\mathbb{E}\!\left[\widehat{A}(x,y)\;\middle|\; y\in\mathcal{Y}^+\right].
\end{aligned}
\]
On $\mathcal{Y}^-$, the same calculation uses
$\widehat{A}(x,y)=-q(x)$ and $\lambda_-(x,g)>1$, which gives
\[
\begin{aligned}
\Bigl\lvert\,\mathbb{E}\!\left[\rho_{\mathrm{resp}}^{\mathrm{oc}}(y)\widehat{A}(x,y)\;\middle|\; y\in\mathcal{Y}^-\right]\,\Bigr\rvert
&=
\lambda_-(x,g)
\Bigl\lvert\,\mathbb{E}\!\left[\widehat{A}(x,y)\;\middle|\; y\in\mathcal{Y}^-\right]\,\Bigr\rvert \\
&>
\Bigl\lvert\,\mathbb{E}\!\left[\widehat{A}(x,y)\;\middle|\; y\in\mathcal{Y}^-\right]\,\Bigr\rvert.
\end{aligned}
\]
For the second part, fix $x$ and write
$P(y)=\pi_{\theta_{\mathrm{old}}}(y\mid x)$ and
$Q(y)=\pi_{\theta_{\mathrm{old}}}(y\mid g(x))$. Differentiating
$\mathcal{L}_{\mathrm{OC}\text{-}\mathrm{GRPO}}^{\mathrm{resp}}$ and evaluating
at $\theta=\theta_{\mathrm{old}}$ gives
\[
\left.
\nabla_\theta
\mathcal{L}_{\mathrm{OC}\text{-}\mathrm{GRPO}}^{\mathrm{resp}}(\theta)
\right|_{\theta=\theta_{\mathrm{old}}}
=
\mathbb{E}_{x\sim\mathcal{D}}
\left[
\mathbb{E}_{y\sim Q}\!\left[
\rho_{\mathrm{resp}}^{\mathrm{oc}}(y)\widehat{A}(x,y)s_x(y)
\right]
\right].
\]
Because $\widehat{A}(x,y)=1-q(x)$ on $\mathcal{Y}^+$ and
$\widehat{A}(x,y)=-q(x)$ on $\mathcal{Y}^-$, the law of total expectation gives
\[
\begin{aligned}
\mathbb{E}_{y\sim Q}\!\left[
\rho_{\mathrm{resp}}^{\mathrm{oc}}(y)\widehat{A}(x,y)s_x(y)
\right]
&=
q(x)(1-q(x))
\mathbb{E}_{y\sim Q}\!\left[
\rho_{\mathrm{resp}}^{\mathrm{oc}}(y)s_x(y)
\;\middle|\; y\in\mathcal{Y}^+
\right] \\
&\quad
-q(x)(1-q(x))
\mathbb{E}_{y\sim Q}\!\left[
\rho_{\mathrm{resp}}^{\mathrm{oc}}(y)s_x(y)
\;\middle|\; y\in\mathcal{Y}^-
\right].
\end{aligned}
\]
For any $S\in\{\mathcal{Y}^+,\mathcal{Y}^-\}$, the correction changes the
conditional law from $Q$ to $P$:
\[
\begin{aligned}
\mathbb{E}_{y\sim Q}\!\left[
\rho_{\mathrm{resp}}^{\mathrm{oc}}(y)s_x(y)
\;\middle|\; y\in S
\right]
&=
\sum_{y\in S}
\frac{Q(y)}
     {\sum_{z\in S}Q(z)}
\frac{P(y)}{Q(y)}
s_x(y) \\
&=
\frac{\sum_{z\in S}P(z)}
     {\sum_{z\in S}Q(z)}
\mathbb{E}_{y\sim P}\!\left[
s_x(y)\;\middle|\; y\in S
\right].
\end{aligned}
\]
Taking $S=\mathcal{Y}^+$ gives the factor
$p(x)/q(x)=\lambda_+(x,g)$, and taking $S=\mathcal{Y}^-$ gives the factor
$(1-p(x))/(1-q(x))=\lambda_-(x,g)$. Substitution gives
\eqref{eq:resp_grad_decomp}.
\end{proof}

\section{Extended Related Works}
\label{app:related-work}

\subsection{Self-Distillation in RLVR}
\label{app:self-distillation}

A complementary line of work uses the model itself, conditioned on richer information, as a self-teacher. \textbf{OPSD}~\citep{zhao2026self} instantiates a single LLM as both privileged teacher (conditioned on ground-truth answers) and unprivileged student (conditioned only on the problem), training via per-token KL/JSD divergence on student rollouts. \textbf{SDPO}~\citep{hubotter2026reinforcement} extends this to rich-feedback settings such as code execution traces, treating the feedback-conditioned policy as a self-teacher. \textbf{GATES}~\citep{stein2026gates} applies a similar idea with consensus gating to filter unreliable supervision in document-grounded question answering. Conceptual ancestors include \textbf{STaR}~\citep{zelikman2022star}, which bootstraps rationales conditioned on the correct answer, and on-policy distillation~\citep{agarwal2024policy}.

OPSD and SDPO share with OC-GRPO the use of training-time-only privileged context, but the methods differ in three essential ways. \emph{First, they are distillation methods, not RL methods}: training minimizes a divergence to the teacher's logit distribution rather than a verifier reward, and correctness is never directly consulted during the update. \emph{Second, the two paradigms inject privileged context at different points in the pipeline}: OPSD and SDPO sample rollouts from the unguided student and use privileged information only to shape the teacher's per-token distribution, whereas OC-GRPO uses the privileged context to drive the rollout distribution itself and reserves the unguided context for the importance correction. SDPO is flexible in the kinds of feedback the self-teacher can consume; neither method, however, specifically targets the learning-cliff regime, and without further algorithmic modifications both rely in this regime on teacher signal computed over student-sampled traces that are typically incorrect, whereas OC-GRPO's guided rollouts more frequently produce reward-bearing trajectories. \emph{Third, OC-GRPO is provably unbiased for the original unguided RLVR objective $J(\theta)$}, while OPSD and SDPO optimize a distillation divergence that is structurally distinct from $J(\theta)$. The two paradigms are complementary rather than competing, distillation provides dense token-level supervision when the teacher distribution is reliable, RL provides reward-anchored supervision when it is not.

\subsection{Hybrid SFT and RL}
\label{app:hybrid-sft-rl}

\textbf{CHORD}~\citep{MIXCHORD} reframes SFT as a dynamically weighted auxiliary objective inside on-policy RL, with a global coefficient and a token-wise weighting function balancing imitation against exploration. \textbf{Prefix-RFT}~\citep{huang2025blending} blends supervised demonstrations with RFT by sampling prefixes from offline traces as starting points for on-policy continuation, scheduled via a cosine decay over prefix length. \textbf{ReST-EM}~\citep{singh2023beyond} iterates between rejection-sampling rollouts and SFT on filtered successes, viewing the procedure as expectation-maximization. These approaches harmonize the two losses; OC-GRPO instead harmonizes the two distributions: guided rollouts provide the exploration that SFT-RL hybrids would obtain through expert demonstrations, while importance correction plays the role that the SFT loss plays in those methods, namely keeping training aligned with the original deployment objective.

\subsection{Self-Curriculum and Adaptive Difficulty}
\label{app:self-curriculum}

A separate body of work treats problem difficulty as a curriculum variable controlled across training. Methods in this family, including \textbf{SEC}~\citep{chen2025self}, \textbf{VCRL}~\citep{jiang2025vcrl}, \textbf{CLPO}~\citep{zhang2025clpo}, \textbf{AdaCuRL}~\citep{li2026adacurl}, \textbf{DUMP}~\citep{wang2025dump}, and \textbf{DARS}~\citep{yang2025depth}, adjust which problems the policy trains on or how much rollout budget each receives, often using rollout reward variance, advantage magnitude, or learning progress as control signals. We list these works for completeness; OC-GRPO is not a curriculum method and operates in a different regime. Curriculum approaches assume that the policy can learn from at least some problems on its own and select among them; we target the regime where every remaining hard problem has $\textsc{pass@}k \approx 0$ without guidance, so privileged guidance is the lever that makes such problems trainable in the first place, before any curriculum question arises.

\subsection{Sparse-Reward RL: Experience Reuse and Hindsight Relabeling}
\label{app:hindsight}

\textbf{Hindsight experience replay (HER)}~\citep{andrychowicz2017hindsight} is the canonical sparse-reward technique: failed trajectories are relabeled with the goals they actually achieved, converting zero-reward experience into useful learning signal. Recent work lifts this idea to language settings: \textbf{AgentHER}~\citep{ding2026agenther} relabels failed multi-step agent trajectories with hindsight goals via a four-stage pipeline, and \citet{hu2025sample} extend experience replay to LM agents through hindsight trajectory rewriting. \textbf{ExGRPO}~\citep{zhan2025exgrpo} maintains a replay buffer of partially correct rollouts organized by correctness, prioritizes valuable experiences via correctness and entropy signals, and blends replayed and on-policy data through a mixed-policy objective. HER and OC-GRPO are mirror images: HER changes the goal post-hoc to match the trajectory, while OC-GRPO changes the prompt pre-hoc to make the trajectory reachable, then importance-corrects the gradient back to the original goal. ExGRPO and replay methods are orthogonal to OC-GRPO, they reuse trajectories the policy did sample, rather than altering the sampling distribution to obtain new ones.

\section{\algonameAdaptive{}: Adaptive Guidance Adjustment}
\label{app:oc_grpo_on}

\paragraph{\algonameAdaptive{} (adaptive augmentation).}
At each training step $t$, given a batch $\mathcal{B} \subset \mathcal{D}$:
\begin{enumerate}[leftmargin=1.8em, itemsep=2pt, topsep=4pt]
    \item \textbf{Try unguided first.} For each $x \in \mathcal{B}$, sample $N$ rollouts from $\pi_{\theta_t}(\cdot \mid x)$. If at least one is correct, keep the unguided group for this problem.
    \item \textbf{Adjust guidance if all rollouts fail.} If all $N$ unguided rollouts fail, form the guided prompt $g_1(x)$ and sample $N$ new rollouts from $\pi_{\theta_t}(\cdot \mid g_1(x))$. Continue adjusting through levels $\ell = 2, 3, 4, 5$ until at least one rollout is correct, and use that group for the update.
    \item \textbf{Proceed with the policy update.} The batch now contains a mix of unguided groups (for easier problems at the current policy) and guided groups (for problems that are currently too hard).
\end{enumerate}

\algonameAdaptive{} differs from \algonameFixed{} in how the guidance level
is selected: the fixed variant fixes $\ell^\star(x)$ once using the
base model $\pi_{\mathrm{ref}}$, while the adaptive variant re-selects guidance at
every training step with respect to the current policy $\pi_{\theta_t}$. Theorem~\ref{thm:credit_assignment} applies
whenever the behavior context used to sample the update group is fixed before that group is sampled. Adaptive adjustment selects a behavior prompt $b\in\{x,g_\ell(x)\}$, where $\ell$ is the guidance level that first produces a correct rollout. In the implementation we use the realized behavior prompt in the importance ratio denominator, which corresponds to plugging in the selected behavior law $\pi_{\theta_t}(\cdot \mid b)$. \algonameAdaptive{} pays an inference cost at every training step: when the unguided rollouts fail, the model must re-sample $N$ rollouts for each guidance level, potentially up to $L$ times per problem. In contrast, \algonameFixed{} pays this cost once with $\pi_{\mathrm{ref}}$ before training, and $\mathcal D_{\mathrm{aug}}$ can be reused across runs. Empirically, \algonameFixed{} is competitive with or stronger than \algonameAdaptive{} (Table~\ref{tab: main table}), so we recommend the fixed variant as the default.

\section{Collapse of Off-Context GRPO without IS Correction}
\label{app:collapse}
As discussed in the main text, naively using guided rollouts
$y_i \sim \pi_{\theta_{\mathrm{old}}}(\cdot \mid g(x))$ while updating as if
they came from the original prompt $x$ gives a biased gradient estimator. We
call this the ``Masked no-IC'' update. Useful guidance
changes the rollout distribution, and no-IC credits guided trajectories as
though they were unguided samples from the target objective. During training,
this repeated mismatch can accumulate, inflating the gradient norm early in
training and eventually causing reward collapse, as shown in
Figure~\ref{fig:collapse_without_is}.

\begin{figure}[h]
    \centering
\includegraphics[width=.85\textwidth]{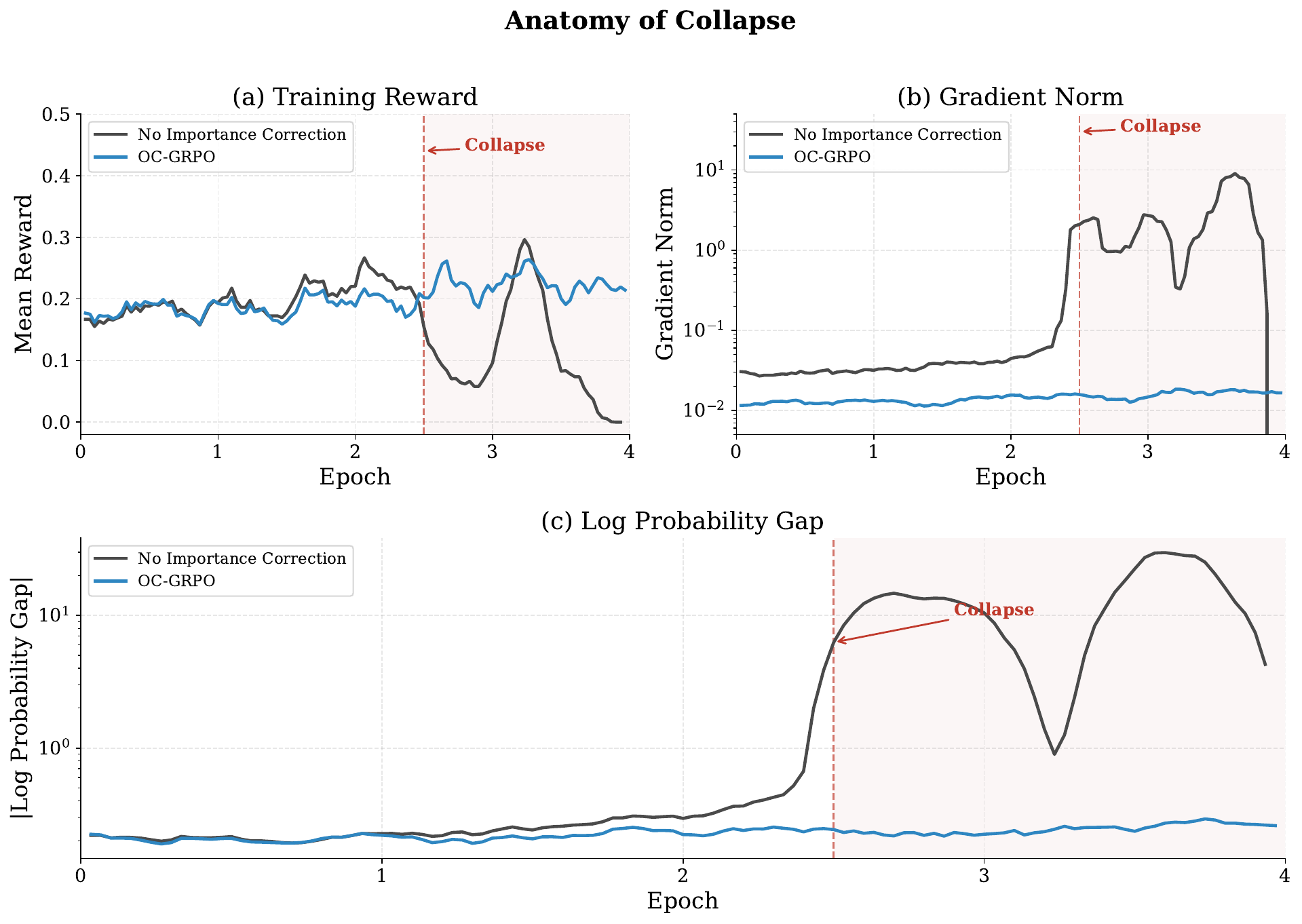}
\caption{\textbf{Importance correction prevents collapse.} Without the importance ratio correction, guided rollouts are scored as if they came from the unguided context, producing growing objective mismatch. This leads to reward collapse, gradient norm spikes, and a widening log-probability gap. \algoname{} keeps the update aligned with the unguided objective.}
\label{fig:collapse_without_is}
\end{figure}

The source of the bias is the off-context mismatch itself. If guidance makes a
correct response much more likely under $g(x)$ than under $x$, then no-IC assigns
too much credit to the unguided policy for a trajectory produced by the guided
rollout distribution. Masking the guidance tokens does not fix this, since the
generated response still came from the guided rollout distribution. \algoname{}
removes this mismatch by weighting each
trajectory by the ratio between the original-prompt probability and the
guided-prompt probability, so the corrected estimator targets $J(\theta)$ by
Theorem~\ref{thm:oc_grpo_unbiased}.

\section{Additional Theoretical Results}

\subsection{Response-Level Unbiasedness}
\label{app:unbiased_proof}

The exact unbiasedness statement is a response-level change of measure. It is separate from the credit-assignment identities in Theorem~\ref{thm:credit_assignment}.

\begin{theorem}[Response-level unbiasedness of the importance correction]
\label{thm:oc_grpo_unbiased}
Fix a problem $x$ and assume the support condition
$\pi_{\theta_{\mathrm{old}}}(y\mid g(x))>0$ whenever
$\pi_\theta(y\mid x)r(x,y)>0$. For a guided rollout
$y\sim\pi_{\theta_{\mathrm{old}}}(\cdot\mid g(x))$, define the response-level estimator
\[
\widehat{G}_{\mathrm{oc}}(\theta;x,y)
:=
\frac{\pi_\theta(y\mid x)}
     {\pi_{\theta_{\mathrm{old}}}(y\mid g(x))}
r(x,y)\nabla_\theta\log\pi_\theta(y\mid x).
\]
Then
\[
\mathbb{E}_{y\sim\pi_{\theta_{\mathrm{old}}}(\cdot\mid g(x))}
\bigl[\widehat{G}_{\mathrm{oc}}(\theta;x,y)\bigr]
=
\mathbb{E}_{y\sim\pi_\theta(\cdot\mid x)}
\bigl[r(x,y)\nabla_\theta\log\pi_\theta(y\mid x)\bigr]
=
\nabla_\theta J(\theta;x).
\]
\end{theorem}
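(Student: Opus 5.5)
The argument is a direct change of measure, followed by the likelihood-ratio (score-function) identity. Throughout I will write $P_\theta(y)=\pi_\theta(y\mid x)$ and $Q(y)=\pi_{\theta_{\mathrm{old}}}(y\mid g(x))$, and I will treat the response space as countable, as in the proof of Lemma~\ref{lem:lambda_factors}. Sums are therefore over finite-length token sequences, and the same argument goes through verbatim with integrals.

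\textbf{First equality.} I expand the left-hand side as
\[
\sum_{y:\,Q(y)>0} Q(y)\,\frac{P_\theta(y)}{Q(y)}\, r(x,y)\,\nabla_\theta\log P_\theta(y)
=\sum_{y:\,Q(y)>0} P_\theta(y)\, r(x,y)\,\nabla_\theta\log P_\theta(y).
\]
The point needing care is extending this sum to all $y$. Any $y$ with $Q(y)=0$ has $P_\theta(y)r(x,y)=0$ by the support condition, so its summand vanishes. If $P_\theta(y)=0$ the score is undefined, but $P_\theta(y)\nabla_\theta\log P_\theta(y)$ should be read as $\nabla_\theta P_\theta(y)$. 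That reading is the convention that makes both sides agree. The sum then equals $\mathbb{E}_{y\sim P_\theta}[r(x,y)\nabla_\theta\log P_\theta(y)]$.

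\textbf{Second equality.} This is the policy-gradient theorem for the per-problem objective $J(\theta;x)=\sum_y P_\theta(y)r(x,y)$, where $r$ does not depend on $\theta$. I differentiate under the sum and use $\nabla_\theta P_\theta=P_\theta\nabla_\theta\log P_\theta$ wherever $P_\theta>0$.

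\textbf{Main obstacle.} The only real difficulty is justifying the interchange of gradient and sum when the response space is infinite. I would handle it by one of two routes. The first is to assume a maximum length $T_{\max}$, which holds in practice and makes the sum finite. The second is to invoke dominated convergence under a bounded-score assumption, using $0\le r\le 1$. I would state the finite-length assumption explicitly and note that it covers the setting of Section~\ref{sec:prelim}.
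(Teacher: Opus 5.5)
Your proposal is correct and follows the same route as the paper: a direct change of measure from $\pi_{\theta_{\mathrm{old}}}(\cdot\mid g(x))$ to $\pi_\theta(\cdot\mid x)$, followed by the score-function identity $\nabla_\theta\sum_y \pi_\theta(y\mid x)r(x,y)=\sum_y \pi_\theta(y\mid x)r(x,y)\nabla_\theta\log\pi_\theta(y\mid x)$. Your extra bookkeeping makes explicit three points that the paper's one-line substitution leaves implicit: restricting to the support of the guided policy, the convention for zero-probability responses, and justifying the gradient--sum interchange through a finite maximum length.
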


\begin{proof}
By direct substitution,
\[
\begin{aligned}
\mathbb{E}_{y\sim\pi_{\theta_{\mathrm{old}}}(\cdot\mid g(x))}
\bigl[\widehat{G}_{\mathrm{oc}}(\theta;x,y)\bigr]
&=
\sum_y
\pi_{\theta_{\mathrm{old}}}(y\mid g(x))
\frac{\pi_\theta(y\mid x)}
     {\pi_{\theta_{\mathrm{old}}}(y\mid g(x))}
r(x,y)\nabla_\theta\log\pi_\theta(y\mid x)\\
&=
\sum_y
\pi_\theta(y\mid x)
r(x,y)\nabla_\theta\log\pi_\theta(y\mid x)\\
&=
\nabla_\theta
\sum_y \pi_\theta(y\mid x)r(x,y)
=
\nabla_\theta J(\theta;x).
\end{aligned}
\]
The support condition makes the ratio well-defined on every reward-bearing response that can contribute to the target expectation.
\end{proof}

\subsection{Bounded Off-Context Correction Term}
\label{app:variance_proofs}

A practical concern with importance sampling is variance inflation, since large IS weights can destabilize training. The \algoname{}-specific component of the importance ratio is bounded with controlled variance. Decompose the per-token importance ratio as
\begin{equation}
\label{eq:app_variance_ic_decomposition}
\rho^{\mathrm{oc}}_{i,t}(\theta) =
\underbrace{\frac{\pi_\theta(y_{i,t} \mid x, y_{i,<t})}{
\pi_{\theta_{\mathrm{old}}}(y_{i,t} \mid x, y_{i,<t})}}_{\rho_{i,t}(\theta)\
\text{(on-policy drift)}}
\;\cdot\;
\underbrace{\frac{\pi_{\theta_{\mathrm{old}}}(y_{i,t} \mid x, y_{i,<t})}{
\pi_{\theta_{\mathrm{old}}}(y_{i,t} \mid g(x), y_{i,<t})}}_{\gamma_{i,t}\
\text{(off-context correction)}}.
\end{equation}
The factor $\rho_{i,t}(\theta)$ is the standard GRPO drift ratio, controlled by clipping in $[1{-}\epsilon, 1{+}\epsilon]$. The \algoname{}-specific factor is $\gamma_{i,t}$, the context correction.

We use the following regularity assumption.
\begin{assumption}[Per-guidance-token regularity]
\label{ass:eta}
There exists $\eta > 0$ such that for every guidance token index
$k \in \{1, \ldots, n\}$ ($g=u_{1:n}$),  the inequality
\[
\left|\log\frac{
\pi_{\theta_{\mathrm{old}}}(y_{i,t} \mid x, u_{1:k-1}, y_{i,<t})}{
\pi_{\theta_{\mathrm{old}}}(y_{i,t} \mid x, u_{1:k}, y_{i,<t})}\right|
\;\leq\; \eta
\]
holds almost surely over the realized token $y_{i,t} \sim Q_{i,t}$
and the rollout history $y_{i,<t}$.
\end{assumption}
Assumption~\ref{ass:eta} states that adding any single guidance token
shifts the old policy's next-token log-probability by at most $\eta$
in either direction. For a well-calibrated language model on
natural-language input, this is small: each additional context token
refines but does not radically reshape the next-token distribution.

\begin{theorem}[Bounded off-context correction term]
\label{thm:lambda_bounded}
Under Assumption~\ref{ass:eta}, for every $(i,t)$, every history $y_{i,<t}$, and every realized token $y_{i,t}$\textup{:}
\begin{enumerate}
    \item \emph{(Deterministic bound)}
    $\; e^{-n\eta} \;\leq\; \gamma_{i,t} \;\leq\; e^{n\eta}$.
    \item \emph{(R\'{e}nyi-2 variance bound)}
    $\;
    \mathbb{E}_{Q_{i,t}}[\gamma_{i,t}^2 \mid y_{i,<t}]
    = e^{D_2(P_{i,t}\,\|\,Q_{i,t})}
    \leq e^{2n\eta}$,
    hence
    $\mathrm{Var}_{Q_{i,t}}[\gamma_{i,t} \mid y_{i,<t}]
    \leq e^{2n\eta} - 1$,
\end{enumerate}
where $n$ is the number of tokens in the guidance text added by $g(x)$, $Q_{i,t}=\pi_{\theta_{\mathrm{old}}}(\cdot \mid g(x), y_{i,<t})$, and $P_{i,t}=\pi_{\theta_{\mathrm{old}}}(\cdot \mid x, y_{i,<t})$.
\end{theorem}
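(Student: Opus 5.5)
The plan is a telescoping argument over the guidance tokens, followed by a change-of-measure computation for the second moment. Fix $(i,t)$, the history $y_{i,<t}$, and the realized token $y_{i,t}$. Write $g(x)=(x,u_{1:n})$ and set
\[
\pi^{(k)} := \pi_{\theta_{\mathrm{old}}}(y_{i,t}\mid x,u_{1:k},y_{i,<t}), \qquad k=0,\ldots,n,
\]
so that $\pi^{(0)}$ is the numerator of $\gamma_{i,t}$ and $\pi^{(n)}$ is its denominator. The ratio then telescopes:
\[
\log\gamma_{i,t}=\log\frac{\pi^{(0)}}{\pi^{(n)}}=\sum_{k=1}^{n}\log\frac{\pi^{(k-1)}}{\pi^{(k)}} .
\]
Assumption~\ref{ass:eta} bounds each summand in absolute value by $\eta$. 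The triangle inequality gives $|\log\gamma_{i,t}|\le n\eta$, and exponentiating yields part~1.

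One detail needs care. The assumption is stated almost surely over $y_{i,t}\sim Q_{i,t}$. For part~1, claimed for every realized token, I will read the assumption as holding for every token in the support. I will also note that every intermediate probability $\pi^{(k)}$ is positive, since the bounded log-ratio forces this whenever $\pi^{(n)}>0$. This keeps every term of the telescoping sum well defined.

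For part~2, I will use the identity
\[
\mathbb{E}_{Q_{i,t}}[\gamma_{i,t}^2\mid y_{i,<t}]=\sum_y Q_{i,t}(y)\frac{P_{i,t}(y)^2}{Q_{i,t}(y)^2}=\sum_y \frac{P_{i,t}(y)^2}{Q_{i,t}(y)},
\]
which is by definition $e^{D_2(P_{i,t}\|Q_{i,t})}$ for the order-2 Rényi divergence. To bound this quantity, I will write it as $\mathbb{E}_{Q}[\gamma^2]$ and apply the pointwise bound $\gamma^2\le e^{2n\eta}$ from part~1, giving $e^{2n\eta}$.

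For the variance, note that $\mathbb{E}_Q[\gamma]=\sum_y P_{i,t}(y)=1$ when $P$ is absolutely continuous with respect to $Q$. Then $\mathrm{Var}=\mathbb{E}[\gamma^2]-1\le e^{2n\eta}-1$.

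The main obstacle is the support and absolute-continuity bookkeeping. The Rényi identity and the fact $\mathbb{E}_Q[\gamma]=1$ both require $P_{i,t}\ll Q_{i,t}$. I will derive this from the assumption: any token with $Q_{i,t}>0$ has $P_{i,t}>0$ by the bound. The converse direction is where the argument could fail, because it requires the assumption to apply to tokens that have zero mass under $Q$. If it does not, I would first try to interpret the almost-sure clause as holding over all tokens with positive probability under either context. Failing that, I would note that $\mathbb{E}_Q[\gamma]=P(\mathrm{supp}\,Q)\le 1$, so the variance bound $\mathbb{E}[\gamma^2]-(\mathbb{E}\gamma)^2\le e^{2n\eta}-1$ still holds if I bound $\mathrm{Var}$ by $\mathbb{E}[\gamma^2]-(\mathbb{E}\gamma)^2$ using $(\mathbb{E}\gamma)^2\ge e^{-2n\eta}$. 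That weaker route gives $e^{2n\eta}-e^{-2n\eta}$, so the stated bound genuinely needs absolute continuity.
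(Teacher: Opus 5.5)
Your proposal is the paper's proof: a telescoping sum over the guidance tokens with the triangle inequality for part~1, then $\mathbb{E}_{Q_{i,t}}[\gamma_{i,t}^2]=\sum_v P_{i,t}(v)^2/Q_{i,t}(v)$ bounded via the pointwise bound, and $\mathbb{E}_{Q_{i,t}}[\gamma_{i,t}]=1$ for the variance; the paper simply presumes $P_{i,t}\ll Q_{i,t}$, and you are right to flag it (your reading of the assumption as covering every token with positive probability under either context settles it). Two corrections to that bookkeeping: first, the implication you derive, $Q_{i,t}(v)>0\Rightarrow P_{i,t}(v)>0$, is $Q_{i,t}\ll P_{i,t}$ rather than the direction you need; second, your closing claim is too strong, since only the R\'enyi identity needs $P_{i,t}\ll Q_{i,t}$, while the variance bound holds regardless because $\mathrm{Var}_{Q_{i,t}}[\gamma_{i,t}]\le\mathbb{E}_{Q_{i,t}}[(\gamma_{i,t}-1)^2]\le(e^{n\eta}-1)^2\le e^{2n\eta}-1$ by part~1.
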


The key observation is that the variance bound scales with the guidance length $n$, not the rollout length $T$. This also motivates the design principle \emph{use the shortest guidance prefix that restores learning signal}. Our main method~\algonameFixed{} builds on this principle.

\begin{proof}
\textbf{(i)} By a telescoping
decomposition,
\[
\log \gamma_{i,t}
\;=\;
\sum_{k=1}^{n} \log
\frac{\pi_{\theta_{\mathrm{old}}}(y_{i,t} \mid x, u_{1:k-1}, y_{i,<t})}{
\pi_{\theta_{\mathrm{old}}}(y_{i,t} \mid x, u_{1:k}, y_{i,<t})}.
\]
By Assumption~\ref{ass:eta} each summand is in $[-\eta, \eta]$, so
$|\log \gamma_{i,t}| \leq n\eta$. Exponentiating gives
$e^{-n\eta} \leq \gamma_{i,t} \leq e^{n\eta}$.

\textbf{(ii)} Recall that for $P \ll Q$, the exponentiated R\'enyi-2
divergence is $e^{D_2(P\|Q)} = \sum_v P(v)^2/Q(v) =
1 + \chi^2(P\|Q)$. With $P_{i,t} =
\pi_{\theta_{\mathrm{old}}}(\cdot \mid x, y_{i,<t})$ and $Q_{i,t} =
\pi_{\theta_{\mathrm{old}}}(\cdot \mid g(x), y_{i,<t})$, the second
moment under $Q_{i,t}$ is
\[
\mathbb{E}_{Q_{i,t}}[\gamma_{i,t}^2 \mid y_{i,<t}]
= \sum_v Q_{i,t}(v) \big(P_{i,t}(v)/Q_{i,t}(v)\big)^2
= \sum_v P_{i,t}(v)^2 / Q_{i,t}(v)
= e^{D_2(P_{i,t}\|Q_{i,t})}.
\]
By the deterministic bound from~(i),
$\gamma_{i,t} \leq e^{n\eta}$ almost surely under $Q_{i,t}$, hence
$\mathbb{E}_{Q_{i,t}}[\gamma_{i,t}^2] \leq e^{2n\eta}$. Finally,
since $\gamma_{i,t} = P_{i,t}/Q_{i,t}$ is a likelihood ratio under
$Q_{i,t}$, $\mathbb{E}_{Q_{i,t}}[\gamma_{i,t}] = 1$, so
$\mathrm{Var}_{Q_{i,t}}[\gamma_{i,t}] =
\mathbb{E}_{Q_{i,t}}[\gamma_{i,t}^2] - 1 \leq e^{2n\eta} - 1$.
\end{proof}

\section{Toy Example: Guided-Target Misalignment}
\label{app:toy_example}

Consider a single problem $x$
with guided version $g(x)$, and suppose the policy chooses
between two reasoning modes, $\textsc{shortcut}$ and
$\textsc{robust}$, with $\textsc{shortcut}$ chosen with probability
$\theta \in [0, 1]$. The verifier always evaluates the original
problem $x$. The conditioning on $x$ or $g(x)$ below only specifies which
prompt generated the response. Assume
\[
\Pr[r=1 \mid x, \textsc{shortcut}] = 0,
\quad
\Pr[r=1 \mid g(x), \textsc{shortcut}] = 1,
\]
\[
\Pr[r=1 \mid x, \textsc{robust}] = \beta,
\quad
\Pr[r=1 \mid g(x), \textsc{robust}] = \beta,
\]
for some $\beta \in (0, 1)$. The target objective $J(\theta)$ is
the expected verifier reward when the response is generated from the original
prompt $x$, while the guided objective $J^{\mathrm{guide}}(\theta)$ is the
expected verifier reward when the response is generated from the guided prompt
$g(x)$. Since the policy selects $\textsc{shortcut}$ with probability
$\theta$ and $\textsc{robust}$ with probability $1-\theta$, we have:
\[
J(\theta) = \theta \cdot 0 + (1 - \theta) \beta = (1-\theta) \beta,
\qquad
J^{\mathrm{guide}}(\theta) = \theta \cdot 1 + (1-\theta) \beta = \theta + (1-\theta) \beta.
\]
Therefore
$\arg\max_{\theta} J(\theta) = 0$ and
$\arg\max_{\theta} J^{\mathrm{guide}}(\theta) = 1$. The two objectives
prefer extreme opposite policies. Furthermore,
$J^{\mathrm{guide}}(\theta) - J(\theta) = \theta \geq 0$, so the
guided objective is uniformly larger but rewards a deployment-
useless behavior.
\section{Experimental Details}
\label{app:experimental_details}

\paragraph{Dataset construction.}
We construct our training set from the MATH dataset~\citep{hendrycksmath2021}, restricting to problems at difficulty Levels~3--5 (intermediate through advanced), which yields $5{,}586$ problems from the training split. We identify problems on which the base model $\pi_{\mathrm{ref}}$ receives no learning signal under standard GRPO.
For each problem $x$, we generate $M = 64$ independent rollouts from the model under training using nucleus sampling 
, and retain only those problems for which \textit{every} rollout fails. This filter yields $\mathbf{595}$ hard problems for the 7B model. Similarly, for the 3B and 1.5B models, we repeat this filtering process to identify model-specific hard problems. For fixed guidance methods (including our~\algonameFixed{}), we construct an augmented training set $\mathcal{D}_{\mathrm{aug}}$ as described in Section~\ref{sec:main_results}.
For adaptive guidance methods (including \algonameAdaptive{}), no dataset augmentation is performed.
 
\paragraph{RL training configuration.}
 
All models are trained with Group Relative Policy Optimization (GRPO)~\citep{shao2024deepseekmath} and algorithms derived from GRPO implemented in veRL~\citep{sheng2024hybridflow}, using vLLM~\citep{kwon2023efficient} for rollout generation. We fine-tuned with LoRA~\citep{hu2022lora} adapters (rank $64$, $\alpha = 128$, dropout $0.05$) applied to all linear layers. The base model weights are frozen; only LoRA parameters are updated. We use AdamW with learning rate $1 \times 10^{-5}$, weight decay $0.01$, and gradient clipping at norm $1.0$. Training runs for $4$ epochs on the training set. The batch size is $32$ prompts with $N = 16$ rollouts per prompt, PPO mini-batch size $32$, and $1$ PPO epoch per GRPO iteration. We apply the clipped surrogate objective with clip ratio $\epsilon = 0.2$, remove the KL penalty ($\beta = 0$) following recent RLVR practice~\citep{zhang2025scaf,qu2026pope, yu2025dapo}, and use no entropy bonus. Losses are aggregated at the token level. Importance correction is applied at the token level. Training rollouts use temperature $0.7$, top-$p = 0.95$
. We use a binary verifiable reward: $1$ if the model's extracted answer (from \texttt{\textbackslash boxed\{\}}) matches the ground-truth answer under symbolic equivalence, and $0$ otherwise.
 
\paragraph{Fair comparison across methods.}
For any given model, all methods in our experiments are \emph{epoch-matched}: each is trained for $4$ epochs over the same training datasets: $\mathcal{D}_{\mathrm{aug}}$ (for the fixed-guidance methods) and $\mathcal{D}$ (for the adaptive-guidance methods). 

 
\paragraph{Infrastructure.}
Adaptive-guidance runs use $4 \times$ NVIDIA A100 40GB GPUs with FSDP (CPU parameter and optimizer offloading) and tensor parallelism $2$ for vLLM rollouts.
Fixed-guidance and vanilla runs use $2 \times$ A100 GPUs with tensor parallelism $1$.
vLLM GPU memory utilization is set to $0.5\text{--}0.6$.
 
\paragraph{Seeds.}
All 7B model configurations are trained with $3$ random seeds ($0$, $123$, $456$), and results are reported as mean $\pm$ standard deviation across seeds.
 
\paragraph{Baseline methods.}
We compare our methods, \algonameFixed{} and \algonameAdaptive{}, against four baselines: vanilla GRPO, POPE~\citep{qu2026pope}, PrefixRL~\citep{setlur2026reuse}, and BREAD~\citep{zhangbread}. Note that all the guided baselines have some version of filtering for hard problems as well. For a fair and uniform comparison we implement all of them under the same training dataset setting as ours. We re-implement the core algorithmic idea of each method within our veRL training loop, using the same base model, LoRA configuration, optimizer settings, and evaluation protocol as our own methods, so that differences in final performance reflect algorithmic choices rather than infrastructure differences.


\paragraph{Metrics.}
For each checkpoint and benchmark, we generate $16$ trajectories per problem 
and report pass@$1$ as the primary metric, with pass@$16$ reported alongside to assess the breadth of the solution distribution.
Pass@$k$ is computed using the unbiased estimator formula.
 
\paragraph{Model selection.}
Checkpoints are evaluated at each epoch boundary. For each seed, we select the fourth epoch checkpoint and report that checkpoint's results across all benchmarks. We selected 4 checkpoints for training as we observed the training gains on held out validation taper beyond the fourth checkpoint.

\section{Cascading Hints}\label{app:hint_families}

In this section we consider 5 levels of hints, in contrast with solution prefixes. Algorithm~\ref{alg:csg_rollouts} describes the implementation of cascading hints style guidance. Appendix~\ref{apx: hint template} contains the prompt templates for generating hints. Unlike solution prefixes, guidance with the hints is a two step process: (a) generation of the hint; (b) constructing a guided prompt using hints. 

\begin{table*}[h]
\centering
\scriptsize
\setlength{\tabcolsep}{3.2pt}
\renewcommand{\arraystretch}{1.08}
\resizebox{\textwidth}{!}{%
\begin{tabular}{lcccccccccccc}
\toprule
\textbf{Method}
& \multicolumn{2}{c}{\textbf{Training}}
& \multicolumn{2}{c}{\makecell{\textbf{AIME}\\\textbf{(1983--2026)}}}
& \multicolumn{2}{c}{\textbf{Gaokao2023}}
& \multicolumn{2}{c}{\textbf{OlympiadBench}}
& \multicolumn{2}{c}{\textbf{Minerva}}
& \multicolumn{2}{c}{\textbf{OmniMath}} \\
\cmidrule(lr){2-3}\cmidrule(lr){4-5}\cmidrule(lr){6-7}\cmidrule(lr){8-9}\cmidrule(lr){10-11}\cmidrule(lr){12-13}
& \textbf{P@1} & \textbf{P@16}
& \textbf{P@1} & \textbf{P@16}
& \textbf{P@1} & \textbf{P@16}
& \textbf{P@1} & \textbf{P@16}
& \textbf{P@1} & \textbf{P@16}
& \textbf{P@1} & \textbf{P@16} \\
\midrule

\rowcolor{gray!10}
\multicolumn{13}{l}{\textbf{Baselines}} \\

Base
& 0.4 & 4.7
& 16.1 & 37.5
& 42.0 & 64.7
& 28.1 & 48.2
& 26.0 & 45.2
& 20.8 & 44.8 \\

Vanilla GRPO
& \rescell{0.7$\pm$0.2}{ref.} & \rescell{7.2$\pm$2.0}{ref.}
& \rescell{16.8$\pm$0.3}{ref.} & \rescell{38.5$\pm$1.0}{ref.}
& \rescell{46.1$\pm$0.7}{ref.} & \rescell{65.9$\pm$0.5}{ref.}
& \rescell{29.6$\pm$0.1}{ref.} & \rescell{48.7$\pm$0.1}{ref.}
& \rescell{26.5$\pm$0.1}{ref.} & \rescell{45.0$\pm$1.1}{ref.}
& \rescell{22.6$\pm$0.2}{ref.} & \rescell{45.8$\pm$0.6}{ref.} \\

\midrule
\rowcolor{gray!10}
\multicolumn{13}{c}{\textbf{Fixed Hints}} \\

\algonameFixed{} (Hints)
& \rescell{\textbf{2.7$\pm$1.2}}{(+276.8\%)} & \rescell{\textbf{11.7$\pm$2.6}}{(+62.0\%)}
& \rescell{16.6$\pm$2.6}{(-1.3\%)} & \rescell{\textbf{40.5$\pm$1.2}}{(+5.1\%)}
& \rescell{46.5$\pm$6.0}{(+0.9\%)} & \rescell{67.0$\pm$2.4}{(+1.7\%)}
& \rescell{28.4$\pm$2.4}{(-4.3\%)} & \rescell{48.3$\pm$0.7}{(-0.7\%)}
& \rescell{26.7$\pm$2.2}{(+0.6\%)} & \rescell{45.5$\pm$0.7}{(+1.1\%)}
& \rescell{22.0$\pm$2.2}{(-2.8\%)} & \rescell{44.6$\pm$0.3}{(-2.5\%)} \\

\addlinespace[4pt]
\midrule
\rowcolor{gray!10}
\multicolumn{13}{c}{\textbf{Adaptive Hints}} \\

\algonameAdaptive{} (Hints)
& \duringbest{1.4$\pm$0.2}{(+91.3\%)} & \duringbest{7.8$\pm$0.5}{(+8.5\%)}
& \duringbest{\textbf{17.9$\pm$0.2}}{(+6.4\%)} & \rescell{39.8$\pm$0.2}{(+3.4\%)}
& \duringbest{\textbf{49.9$\pm$0.1}}{(+8.3\%)} & \duringbest{\textbf{67.6$\pm$0.7}}{(+2.6\%)}
& \duringbest{\textbf{30.0$\pm$0.3}}{(+1.1\%)} & \rescell{48.7$\pm$0.3}{(+0.1\%)}
& \duringbest{\textbf{27.7$\pm$0.5}}{(+4.4\%)} & \duringbest{\textbf{46.2$\pm$1.1}}{(+2.7\%)}
& \duringbest{\textbf{23.2$\pm$0.2}}{(+2.7\%)} & \duringbest{\textbf{46.2$\pm$0.9}}{(+0.9\%)} \\

\algoname{} (Self-Correction)
& \rescell{0.7$\pm$0.1}{(-2.4\%)} & \rescell{6.9$\pm$0.4}{(-4.7\%)}
& \rescell{17.3$\pm$0.2}{(+3.1\%)} & \duringbest{40.3$\pm$0.9}{(+4.5\%)}
& \rescell{46.4$\pm$2.1}{(+0.6\%)} & \rescell{66.5$\pm$1.0}{(+0.9\%)}
& \rescell{29.8$\pm$0.4}{(+0.7\%)} & \rescell{49.2$\pm$1.0}{(+1.0\%)}
& \rescell{27.0$\pm$0.9}{(+1.9\%)} & \rescell{45.2$\pm$0.8}{(+0.5\%)}
& \rescell{22.7$\pm$0.6}{(+0.4\%)} & \rescell{45.9$\pm$0.2}{(+0.4\%)} \\

\algonameAdaptive{} (Frontier Hints)
& \rescell{0.8$\pm$0.3}{(+6.8\%)} & \rescell{6.2$\pm$1.6}{(-14.7\%)}
& \rescell{17.4$\pm$0.5}{(+3.4\%)} & \rescell{40.1$\pm$1.5}{(+4.0\%)}
& \rescell{48.5$\pm$1.4}{(+5.2\%)} & \rescell{67.2$\pm$0.2}{(+2.0\%)}
& \rescell{29.9$\pm$0.4}{(+0.8\%)} & \duringbest{\textbf{49.4$\pm$0.4}}{(+1.4\%)}
& \rescell{27.5$\pm$0.5}{(+3.6\%)} & \rescell{45.5$\pm$0.6}{(+1.1\%)}
& \rescell{23.0$\pm$0.6}{(+1.6\%)} & \rescell{45.7$\pm$0.4}{(-0.1\%)} \\
\bottomrule
\end{tabular}%
}
\caption{Hint ablation on Qwen2.5-7B-Instruct using the best checkpoint per seed selected by validation Pass@1. Each non-base cell reports seed-averaged mean$\pm$std on the first line and relative improvement over Vanilla GRPO on the second line. Within the Adaptive Hints section, the best method per metric is shown in italics. Overall best run(s) for each metric are bolded.}
\label{tab:hint_ablation_qwen7b_bestepoch}
\end{table*}

\begin{algorithm}[h]
\caption{Cascading self-guidance rollout construction (per training step)}
\label{alg:csg_rollouts}
\KwIn{Batch $\mathcal{B}=\{x^{(d)}\}_{d=1}^{D}\subset\mathcal{D}_{\mathrm{hard}}$; snapshot policy $\pi_{\theta_{\mathrm{old}}}$; verifier $v$; group size $N$; max hint level $L$.}
\KwOut{Final rollout groups $\{G^{(d)}_{\mathrm{final}}\}_{d=1}^{D}$ and behavior contexts $\{b^{(d)}\}_{d=1}^{D}$.}

\ForEach{$x \in \mathcal{B}$}{
  $G \leftarrow \emptyset$\;
  \For{$i\leftarrow 1$ \KwTo $N$}{
    sample $y_i \sim \pi_{\theta_{\mathrm{old}}}(\cdot\mid x)$\;
    $G \leftarrow G \cup \{y_i\}$\;
  }
  compute rewards $\{r_i\}_{i=1}^{N} \leftarrow v(x, G)$\;
  \If(\tcp*[f]{At least one success under the original prompt}){$\sum_{i=1}^{N} r_i > 0$}{
    $G_{\mathrm{final}} \leftarrow G$; $b \leftarrow x$.
  }
  \Else(\tcp*[f]{Learning cliff: all $N$ rollouts fail}) {
    $(G_{\mathrm{final}},b) \leftarrow \textsc{RegenerateWithSelfGuidance}(x, G, \pi_{\theta_{\mathrm{old}}}, v, N, L)$\;
  }
  store $(G^{(d)}_{\mathrm{final}}, b^{(d)}) \leftarrow (G_{\mathrm{final}}, b)$\;
}
\Return{$\{(G^{(d)}_{\mathrm{final}},b^{(d)})\}_{d=1}^{D}$}\;
\end{algorithm}

\begin{algorithm}[h]
\caption{\textsc{RegenerateWithSelfGuidance}: adjusting hint levels until success}
\label{alg:regen_with_hints}
\KwIn{Problem $x$; failed unguided group $G=\{y_i\}_{i=1}^{N}$; snapshot policy $\pi_{\theta_{\mathrm{old}}}$; verifier $v$; group size $N$; max hint level $L$.}
\KwOut{A final group $G_{\mathrm{final}}$ and its behavior context $b\in\{g_\ell(x)\}_{\ell=1}^{L} \cup \{x\}$.}

\For{$\ell \leftarrow 1$ \KwTo $L$}{
  generate hint $h_\ell \leftarrow \textsc{HintGen}_\ell(x, G)$\tcp*[f]{L1--L4 use failures; L2--L5 may use $y^\star(x)$}
  form guided prompt $g_\ell(x) \leftarrow (x,h_\ell)$\;
  $G' \leftarrow \emptyset$\;
  \For{$i\leftarrow 1$ \KwTo $N$}{
    sample $y'_i \sim \pi_{\theta_{\mathrm{old}}}(\cdot\mid g_\ell(x))$\;
    $G' \leftarrow G' \cup \{y'_i\}$\;
  }
  compute rewards $\{r'_i\}_{i=1}^{N} \leftarrow v(x, G')$\;
  \If(\tcp*[f]{Stop as soon as any guided rollout succeeds}){$\sum_{i=1}^{N} r'_i > 0$}{
    \Return{$(G', g_\ell(x))$}\;
  }
}
\Return{$(G, x)$}\tcp*[f]{Fallback: no success at any hint level}\;
\end{algorithm}

\section{All Prompt Templates}

\subsection{Solution Prefix as Guidance}\label{apx: solution prefix template}

In place of LLM-generated hints, the prefix cascade exposes increasing portions of the
reference solution itself as the privileged scaffold. Let $y^\star(x)$ denote the reference
solution after stripping non-text artifacts (e.g., Asymptote diagram blocks). We define a
prefix operator $\pi_\alpha(y^\star)$ that returns the first $\alpha$ fraction of $y^\star$
by character count, snapped to the nearest preceding word boundary, and we instantiate the
cascade at fractions $\alpha \in \{0.2,\, 0.4,\, 0.6,\, 0.8,\, 1.0\}$. At $\alpha < 1.0$ the
prefix is a strict initial segment of the reference solution that the model is asked to
\emph{complete}; at $\alpha = 1.0$ the prefix is the full reference solution and the model
is asked to \emph{verify and reproduce} it. Unlike the hint cascade, no LLM call is required
to produce a prefix---the operator $\pi_\alpha$ is a pure string operation on the
ground-truth solution, so the cascade is deterministic and free.

The cascade is applied identically in the adaptive and fixed settings. In both cases, prefix-augmented prompts are used only to generate
trajectories; the policy gradient is taken with respect to the original unprefixed prompt under an IS correction (Section~\ref{sec:oc_grpo}).

Prefixes are then injected into a \emph{solve prompt} for trajectory generation. The key
design is that the model is always asked to produce a full solution and a final boxed
answer, while the prefix length varies by level.

\begin{promptbox}[Prefix-conditioned solve prompt (used for guided rollout generation)]
(partial prefix levels) Problem: {question}
Partial reference solution: {prefix}
Complete the rest of the solution step by step and output the final answer within \boxed{}. You can use the partial reference solution that follows the question to help you solve the problem.

system prompt for partial prefix levels: You are a math problem solver. When given a partial reference solution, use it to guide your reasoning, but write your solution independently. Do not repeat, copy, or paraphrase the reference text. Show your own step-by-step work and arrive at the answer through your own reasoning.

(full solution level) Problem: {question}
Reference solution: {solution}
Rephrase the solution above in your own words. Show your step-by-step work and output the final answer within \boxed{}.
\end{promptbox}

The two framings are kept distinct on purpose. For partial prefixes ($\alpha<1$) the
anti-repetition system prompt discourages the model from merely echoing the given segment
and pushes it to produce its own continuation, so that the trajectory contains genuine
model-generated reasoning over which the policy gradient can be computed. For the full
solution ($\alpha=1$) we drop the anti-repetition prompt and ask explicitly for a paraphrase, the goal at
this level is to provide a guaranteed-correct trajectory whose answer-extraction reward is
$1$, anchoring the IS-corrected gradient even on problems the base policy never solves on
its own.

\subsection{Cascading Hint Templates and Hinted Solve Prompts}\label{apx: hint template}

We employ a \emph{hierarchy} of hint generators with increasing privileged information.
Let $y^\star(x)$ denote the reference solution and let $y$ denote a failed model trajectory. We generate hints at levels:
(i) feedback on the failed attempt, (ii) conceptual comparison to the reference, (iii) high-level strategy, (iv) detailed
guidance, and (v) full-solution paraphrase.
\begin{promptbox}[Cascading hint generation templates (L1--L5)]
L1 (blind; uses failure only):
You are an expert math tutor. A student attempted this problem but got it wrong.

Problem:
{problem}

Student's failed attempt:
{trajectory}

Analyze the student's work and think about the concepts in the problem. Provide a guidance or hint that redirects the student towards the correct approach.

Hint:

L2 (conceptual; uses reference + failure):
You are an expert math tutor. A student attempted this problem but got it wrong.

Problem:
{problem}

Reference solution:
{solution}

Student's failed attempt:
{trajectory}

Compare the student's approach with the correct solution and think about the key concepts that the student missed and provide guidance or hint that redirects the student to the correct solution. Do not reveal full solution.

Hint:

L3 (strategic; uses reference):
You are an expert math tutor. A student is struggling with this problem.

Problem:
{problem}

Reference solution:
{solution}

Provide a STEP-BY-STEP STRATEGY for solving this problem as a numbered list. Highlight the key approach needed. Do not reveal final answer.

Hint:

L4 (detailed; uses reference):
You are an expert math tutor. A student is struggling with this problem.

Problem:
{problem}

Reference solution:
{solution}

Provide DETAILED GUIDANCE to help the student solve this problem. Include:
- The correct approach with key intermediate calculations
- Specific values and expressions the student needs
- The critical steps that lead to the answer
You may reveal most of the solution path, but do NOT state the final boxed answer.

Hint:

L5 (paraphrase; uses reference):
Rephrase the reference solution in your own words. Show your step-by-step work and output the final answer within \boxed{}.
\end{promptbox}

Hints are then injected into a \emph{solve prompt} for a re-attempt. The key design is that the model is always asked to
produce a full solution and a final boxed answer, while the hint varies by level.

\begin{promptbox}[Hint-conditioned solve prompt (used for re-attempt generation)]
(L1--L4) Problem: {question}

Hint: {hint}

Let's think step by step and output the final answer within \boxed{}. You can use the hint that follows the question to help you solve the problem.

system prompt for L1--L4: You are a math problem solver. When given a hint, use it to guide your reasoning, but write your solution independently. Do not repeat, copy, or paraphrase the hint text. Show your own step-by-step work and arrive at the answer through your own reasoning.

(L5) Problem: {question}

Full solution: {solution}

Rephrase the solution above in your own words. Show your step-by-step work and output the final answer within \boxed{}.
\end{promptbox}